\documentclass{article}
\usepackage[preprint]{neurips_2026}
\usepackage{natbib}

\setcitestyle{authoryear,round}

\usepackage[utf8]{inputenc} 
\usepackage[T1]{fontenc}    
\usepackage{hyperref}       
\usepackage{url}            
\usepackage{booktabs}       
\usepackage{amsfonts}       
\usepackage{nicefrac}       
\usepackage{microtype}      
\usepackage{comment}
\usepackage{multirow}
\usepackage{amsthm}
\usepackage{subcaption}
\usepackage{wrapfig}
\usepackage{tabularx}
\usepackage{duckuments}
\usepackage{caption}
\usepackage{graphicx}
\usepackage{amsmath}
\usepackage{algorithmic}
\usepackage{algorithm}
\usepackage{mathrsfs}

\usepackage{xspace}
\usepackage[table]{xcolor}

\usepackage[most]{tcolorbox}
\usepackage{listings}
\usepackage{xcolor}

\definecolor{boxbg}{gray}{0.96}
\definecolor{boxframe}{gray}{0.35}

\lstdefinestyle{mytt}{
  basicstyle=\ttfamily\footnotesize,
  columns=fullflexible,
  breaklines=true,
  showstringspaces=false,
  numbers=left,
  numberstyle=\tiny\color{gray},
  numbersep=6pt,
  xleftmargin=1.2em,
  frame=none
}

\newtcolorbox{promptbox}[1][]{
  enhanced,
  breakable,              
  colback=boxbg,
  colframe=boxframe,
  boxrule=0.8pt,
  arc=2mm,                
  left=2mm,right=2mm,top=1.5mm,bottom=1.5mm,
  #1
}

\newtheorem{theorem}{Theorem}[subsection]

\newtheorem{remark}[theorem]{Remark}
\newtheorem{proposition}[theorem]{Proposition}

\title{AEM: Adaptive Entropy Modulation for Multi-Turn Agentic Reinforcement Learning}
\author{
  Haotian Zhao$^1$\thanks{Equal contribution.} \\
  \And
  Songlin Zhou$^2$\footnotemark[1] \\
  \And
  Yuxin Zhang$^1$\footnotemark[1] \\
  \AND
  Stephen S.-T. Yau$^3$\\
  \And
  Wenyu Zhang$^1$ \\
  \And
  Lun Tian$^1$ \\
  \And
  Tianshu Zhu$^1$ \\
  \And
  Yifeng Huang$^4$ \\
  \And
  Yucheng Zeng$^1$ \\
  \And
  Jingnan Gu$^1$\thanks{Corresponding author.} \\
  \And
  Daxiang Dong$^1$\footnotemark[2] \\
  \And
  Jianmin Wu$^1$\footnotemark[2] \\
  \AND
  $^1$\texttt{\{zhaohaotian02,zhangyuxin15,zhangwenyu08, zhutianshu,tianlun,}\\
\texttt{zengyucheng,gujingnan,dongdaxiang,wujianmin\}@baidu.com}, \quad Baidu\\
  $^2$\texttt{zhousl24@mails.tsinghua.edu.cn},\quad Tsinghua University\\
  $^3$\texttt{yau@uic.edu}, \quad Tsinghua University\\
  $^4$\texttt{yfhuang24@m.fudan.edu.cn},\quad Fudan University
}
\definecolor{brightgreen}{RGB}{0, 255, 0} 
\newcommand{\needrev}[1]{\textcolor{brightgreen}{#1}}
\newcommand{\name}{AEM\xspace}
\newcommand{\Hresp}{{\mathcal H}_{\mathrm{resp}}}
\newcommand{\E}{\mathbb E}
\newcommand{\Hpolicy}{{\mathcal H_{\mathrm{policy}}}}
\begin{document}

\maketitle

\begin{abstract}
Reinforcement learning (RL) has substantially improved the ability of large language model (LLM) agents to interact with environments and solve multi-turn tasks. However, effective agentic RL remains challenging: sparse outcome-only rewards provide limited guidance for assigning credit to individual steps within long interaction trajectories. Existing approaches often introduce dense intermediate supervision, such as process reward models or auxiliary self-supervised signals, which increases supervision and tuning complexity and may limit generalization across tasks and domains. 
We present \name, a supervision-free credit assignment method that adaptively modulates entropy dynamics during RL training to improve the exploration-exploitation trade-off. Since in agentic RL the environment is typically affected by a complete response, rather than an individual token, our analysis lifts entropy dynamics from the token level to the response level, aligning uncertainty estimation with the effective action granularity of LLM agents and reducing sensitivity to token-level sampling noise. We further show that entropy drift under natural-gradient updates is governed by the interaction between the sampled-response advantage and its relative surprisal. Motivated by this result, \name derives a practical response-level uncertainty proxy and uses it to rescale advantages, leveraging the evolving balance between positive and negative samples to naturally transition from exploration to exploitation. Extensive experiments on ALFWorld, WebShop, and SWE-bench-Verified with models ranging from 1.5B to 32B demonstrate that \name consistently improves strong RL baselines, including a +1.4\% gain when integrated into a state-of-the-art software-engineering RL training framework.



\end{abstract}

\section{Introduction}
Large language models (LLMs) are increasingly being deployed as interactive agents that solve complex tasks through multi-turn reasoning~\citep{xu2025amem,zeng2025reinforcing}, tool use~\citep{shen2024taskbench,wu2024avatar}, and sustained interaction with external environments~\citep{chentest,fang2025webevolver}. 
In such agentic settings, LLMs are no longer evaluated solely by isolated generation quality, but by their ability to make sequential decisions~\citep{zhang2025landscape}: repeatedly observing environment feedback, selecting actions, and refining their behavior across long interaction trajectories~\citep{shinn2023reflexion,erdoganplan}. 
This shift has enabled rapid progress in challenging domains such as autonomous software engineering~\citep{yang2024sweagent,yangswe}, embodied assistance~\citep{yang2024embodied,li2024embodied}, and GUI navigation~\citep{yuanse,limobileuse}.

Reinforcement learning (RL) has emerged as a central paradigm for improving such agents~\citep{dong2026agentic}, with group-based methods such as GRPO~\citep{shao2024deepseekmath} providing an effective value-free alternative to actor-critic training~\cite{konda1999actor,mnih2016a3c}. However, extending these methods from single-turn post-training to multi-turn agentic RL remains fundamentally challenging. Under such settings, feedback is sparse and outcome-based: the agent receives a reward only after completing a long trajectory~\citep{fenggroup}. As a result, different steps within the same trajectory often receive nearly indistinguishable learning signals, leading to ambiguous credit assignment and inefficient policy improvement.


Existing approaches address this issue by introducing denser credit signals. 
\textit{Reward shaping-based methods}, such as process reward models~\citep{lightman2023let}, provide dense step-level supervision but require additional models or annotations;
\textit{tree-structured optimization methods}, such as Tree-GRPO~\citep{ding2026treegrpo} and ATPO~\citep{caoatpo}, enable fine-grained credit propagation via branching trajectories but incur high computational overhead in multi-turn settings;
\textit{self-supervised methods} (such as GiGPO~\citep{fenggroup} and IGPO~\citep{wang2025information}) infer step-level signals from trajectory structure without auxiliary supervision but are prone to context inconsistency, grouping bias, and heavy dependence on structural assumptions, which limit robustness and generalization.
Collectively, these limitations call for a scalable, fine-grained credit assignment framework that \textit{does not rely on extra supervision, heavy computation, and restrictive structural assumptions.}

Specifically, we notice that: (i) the policy's own entropy already provides an intrinsic signal for credit assignment: high-entropy responses typically reflect exploratory decisions, whereas low-entropy responses indicate more confident policy behavior; (ii) each completed \textit{response\footnote{In practice, a response usually combines reasoning and acting; in RL theory, it's the "action" sampled from the policy. To avoid ambiguity, we use the term "response".}} is the effective unit that changes the environment state. Therefore, we treat response-level entropy as an intrinsic signal for credit modulation. We demonstrate that the entropy drift induced by a sampled response is governed by the interaction between its advantage and relative response surprisal. This motivates \textbf{A}daptive \textbf{E}ntropy \textbf{M}odulation (\name), a credit assignment algorithm that uses a practical response-entropy proxy to rescale response-level advantages. \name adaptively preserves exploration early in training and promotes exploitation as successful samples become more prevalent, enhancing response diversity early in training while enabling more complete convergence in later stages.

Our contributions are three-fold.
\begin{itemize}
    \item We provide a response-level theoretical analysis of entropy dynamics in multi-turn agentic RL. By showing that entropy drift is determined by the interaction between sampled-response advantage and relative surprisal, our analysis reveals response-level uncertainty as a principled intrinsic signal for credit assignment.
    \item We propose \name, a \textit{supervision-free, lightweight, plug-in method} that modulates response-level advantages using an entropy-derived uncertainty proxy. By leveraging the evolving balance between positive and negative samples during training, \name adaptively guides the policy from early-stage exploration to late-stage exploitation.
    \item
    We conduct extensive experiments on ALFWorld, WebShop, and SWE-bench-Verified using models from 1.5B to 32B.  \name consistently improves multiple strong group-based RL baselines, with peak gains of \textbf{8.8\%} on GRPO with Qwen2.5-1.5B on ALFWorld, and a \textbf{+1.4\%} improvement when applied to DeepSWE on SWE-bench-Verified, demonstrating the effectiveness and generality of entropy-aware response-level credit modulation.
\end{itemize}

\section{Related Work}
\label{sec:related_work}

\paragraph{{From LLMs to Agentic RL.}}
Representative works such as ReAct~\citep{yao2023react} and Toolformer~\citep{schick2023toolformer} demonstrate that LLMs can interleave reasoning with actions and external tool calls, shifting the role of LLMs from passive generators to interactive decision-makers.
Training such agents increasingly relies on RL, where group-based methods such as RLOO~\citep{ahmadian2024back} and GRPO~\citep{shao2024deepseekmath} {have} emerged as a dominant approach.
Extending these methods from single-turn to multi-turn agentic settings exacerbates sparse rewards: feedback arrives only at the end, providing little guidance for intermediate decisions. The lack of step-level supervision yields high-variance gradients and ambiguous credit assignment, obscuring which intermediate actions should be reinforced or discouraged.



\paragraph{{Credit Assignment} in Agentic RL.}
Credit assignment is a long-standing challenge in agentic RL with delayed and sparse rewards. Existing efforts for step-level credit assignment in agentic RL differ mainly in where and how credit signals are derived. Some rely on \emph{external signals}, such as value functions or step-level supervision~\citep{schulman2017ppo,lightman2023let}, but introduce additional modeling and scaling overhead. Others derive credit \emph{internally} from sampled trajectories~\citep{fenggroup,wang2025information}, avoiding auxiliary supervision; some methods infer credit implicitly from trajectory attributes, while others further refine credit through structured propagation~\citep{caoatpo,ding2026treegrpo} or reward redistribution~\citep{wang2025spa}, improving credit granularity but often incurring additional computational cost in multi-turn settings.
To address these limitations, a more general, lightweight, and adaptive credit assignment method is needed.

\paragraph{Entropy-Aware Policy Optimization.}
Entropy has long been used in RL as a regularization term for promoting exploration~\citep{cui2025entropy,petrenko2026entropy,chen2026flexible} and improving training stability~\citep{pmlr-v48-mniha16}. Recent studies have investigated entropy-aware training objectives, including entropy-regularized policy optimization~\citep{xu2025epo} and entropy-guided advantage scaling~\citep{wang2025harnessing,10.1145/3774904.3792301}. In addition, other work~\citep{shen2026on} have demonstrated that premature entropy collapse in the early phase of training can cause degraded downstream performance. Collectively, they indicate that policy entropy reflects model uncertainty and can provide an informative signal beyond external rewards.
Our method differs from prior entropy-aware approaches that either use entropy as a token-level auxiliary objective or regularizer, or leverage uncertainty for step-wise gradient recalibration. 
\name instead is motivated by a response-level analysis of entropy dynamics and uses response-level entropy only to rescale advantages, thereby adaptively shaping entropy dynamics throughout training.

\section{Theoretical Analysis}\label{sec:theoretical}



\subsection{Preliminaries}
\label{subsec:resp_entropy_geometry}

We consider a multi-turn agentic RL setting, where an agent policy \(\pi_\theta(\cdot\mid s)\) interacts with an environment over \(T\) steps. At each step \(t\in\{1,\dots,T\}\), the agent observes a state \(s_t\in\mathcal S\) (e.g., language messages, tool outputs, or webpage snapshots) and produces a textual response \(a_t\in\mathcal A_t \subset\mathcal V^{\le n}\) (e.g., free-form text, tool call with arguments, or interface selection), where \(\mathcal V\) is the LLM vocabulary and \(n\) is the maximum output length. Given prompt $s_0$, an episode yields a trajectory
\(
\tau=\{(s_0,a_0),\dots,(s_{T-1},a_{T-1})\},
\)
 sampled from $P_\theta(\cdot \mid s_0) = \prod_{t=0}^{T-1}\pi_\theta(\cdot \mid s_t)$ under Markov Decision Process assumption, conditioned on $s_0$. The policy is trained to maximize the expected trajectory return
$J(\theta)=\mathbb E_{\tau\sim P_\theta}[R(\tau)]$.
Each sampled response \(a_t\) at state \(s_t\) is associated with an advantage \(A(a_t,s_t)\) determined by the base advantage estimator. Hence, conditioning on a sampled pair \((a_t,s_t)\), the corresponding policy optimization surrogate objective is
\begin{equation}
    \ell_{a_t}(\pi) := A(a_t,s_t)\log \pi_\theta(a_t\mid s_t).
\end{equation}
In agentic RL, the environment typically reacts after a complete response is generated, making the response an effective interaction unit, rather than an individual token. The objective \(\ell_{a_t}(\pi)\) is consistent with this granularity, assigning a single learning signal to the whole response. Accordingly, we study response-level uncertainty, and define the response surprisal
\begin{equation}
    S(a_t\mid s_t):=-\log \pi_\theta(a_t\mid s_t)
=
-\sum_{\ell=1}^{|a_t|}\log p_\theta(y_\ell\mid s_t,y_{<\ell}),
\end{equation}
with the response-level Shannon entropy
\begin{equation}
    \Hresp(s_t):=
-\sum_{a_t \in \mathcal A_t} \pi_\theta(a_t\mid s_t)\log \pi_\theta(a_t\mid s_t) = \mathbb E_{a_t \sim \pi_\theta(\cdot|s_t)}[S(a_t\mid s_t)].
\end{equation}

\subsection{Response-Level Entropy Geometry}

\begin{theorem}[Relationship among token, response, and policy entropy. Proved in Appendix~\ref{appendix:response}]\label{thm:responselevelentropy}
Let $a_t = (Y_1,..,Y_L) \sim \pi_\theta(\cdot|s_t)$ denote a sampled response spanned by tokens $Y_\ell \sim p_\theta(\cdot\mid s_t, Y_{<\ell})$, and $s_0$ denote the initial state in the dataset $\mathcal D$. The token-level entropy $\mathcal H_\ell(a_t,s_t)$ and the policy entropy $\mathcal H_{\mathrm{policy}}$ are respectively formulated by
\begin{equation}
   \mathcal H_\ell(a_t,s_t):=\E_{Y_\ell \sim p_\theta(\cdot\mid s_t, Y_{<\ell})}[-\log p_\theta(Y_\ell|s_t,Y_{<\ell})] =-\sum_{y\in \mathcal{V}}p_\theta(y|s_t,y_{<\ell})\log p_\theta(y|s_t,y_{<\ell});
\end{equation}
\begin{equation}
    \Hpolicy = \mathbb E_{s_0 \sim \mathcal D, \tau \sim P_\theta(\cdot \mid s_0)}\left[\sum_{t=0}^{T-1}\sum_{\ell=1}^{|a_t|}\mathcal H_\ell(a_t,s_t)\right].
\end{equation}
Then, the response-level entropy is the expectation of token-level entropy sum:
\begin{align}
    \Hresp(s_t) = \mathbb E_{a_t\sim \pi_\theta(\cdot|s_t)}\left[\sum_{\ell\ge 1}\mathcal H_\ell(a_t,s_t)\mathbf 1\{\ell \le |a_t|\}|s_t\right],
\end{align}
and the policy entropy is the expectation of response-level entropy sum:
\begin{align}\label{eq:policy}
    \Hpolicy = \mathbb E_{s_0 \sim \mathcal D, \tau \sim P_\theta(\cdot\mid s_0)} \left[\sum_{t=0}^{T-1} \Hresp(s_t)\right].
\end{align}
Therefore, response-level entropy provides a structurally faithful intermediate uncertainty measure: entropy modulation applied at the response level induces corresponding changes in policy entropy, while being less sensitive to token-level sampling variation.
\end{theorem}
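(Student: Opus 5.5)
The plan has two parts. The first identity is the chain rule for entropy of an autoregressive sequence. The second follows from the tower property applied to the trajectory distribution $P_\theta$.

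\emph{Response level.} Start from $\Hresp(s_t)=\E_{a_t\sim\pi_\theta(\cdot\mid s_t)}[S(a_t\mid s_t)]$ and expand the surprisal into the token sum $S(a_t\mid s_t)=-\sum_{\ell\ge1}\mathbf 1\{\ell\le|a_t|\}\log p_\theta(Y_\ell\mid s_t,Y_{<\ell})$. Two conventions are needed:
\begin{itemize}
\item treat the end-of-sequence token as an ordinary vocabulary element, so that $|a_t|$ is a stopping time for the token filtration $\mathcal F_\ell=\sigma(Y_1,\dots,Y_\ell)$;
\item since $|a_t|\le n$, the sum has at most $n$ terms.
\end{itemize}
By linearity we can then exchange expectation and the finite sum. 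The event $\{\ell\le|a_t|\}$ equals $\{\text{no EOS among } Y_{<\ell}\}$, so it is $\mathcal F_{\ell-1}$-measurable. Conditioning on $\mathcal F_{\ell-1}$ therefore gives
\[
\E\bigl[\mathbf 1\{\ell\le|a_t|\}\,(-\log p_\theta(Y_\ell\mid s_t,Y_{<\ell}))\bigr]=\E\bigl[\mathbf 1\{\ell\le|a_t|\}\,\mathcal H_\ell(a_t,s_t)\bigr],
\]
because the inner conditional expectation of $-\log p_\theta(Y_\ell\mid\cdot)$ given the prefix is exactly $\mathcal H_\ell$. Summing over $\ell$ yields the first claimed identity. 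The key structural point is that $\mathcal H_\ell(a_t,s_t)$ depends only on the prefix $Y_{<\ell}$.

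\emph{Policy level.} Begin with the definition of $\Hpolicy$ and write the inner double sum as $\sum_t X_t$, where $X_t:=\sum_{\ell}\mathcal H_\ell(a_t,s_t)\mathbf 1\{\ell\le|a_t|\}$. Under the Markov factorization $P_\theta=\prod_t\pi_\theta(\cdot\mid s_t)$, conditioning on the history up to $s_t$ means $a_t$ is distributed as $\pi_\theta(\cdot\mid s_t)$. The tower property then gives $\E[X_t]=\E\bigl[\E[X_t\mid s_0,a_0,\dots,s_t]\bigr]=\E[\Hresp(s_t)]$, using the response-level identity just proved. Summing over $t$ by linearity gives \eqref{eq:policy}. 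If $T$ is a random horizon, I would instead pad with absorbing states of zero entropy, or note that $\{t<T\}$ is determined by the history before $a_t$, and repeat the stopping-time argument.

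\emph{Expected obstacle.} The main obstacle is rigor around variable length. One issue is justifying that the indicator $\mathbf 1\{\ell\le|a_t|\}$ is measurable with respect to the prefix, so the conditioning step is valid. The other is ensuring integrability so the sums and expectations can be exchanged. Boundedness handles the latter: each $\mathcal H_\ell\le\log|\mathcal V|$, $|a_t|\le n$, and $T$ is finite, so dominated convergence is trivial.

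The closing interpretive sentence of the theorem is a qualitative reading of \eqref{eq:policy}, not a mathematical claim, so I would not attempt to prove it. I would only remark that $\Hpolicy$ is a nonnegative combination of the $\Hresp(s_t)$ taken under the state-visitation distribution, although that distribution itself also changes with $\theta$.
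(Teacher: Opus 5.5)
Your proposal is correct and follows the paper's proof essentially step for step. You expand the response surprisal autoregressively, exchange the finite sum with the expectation, and use the prefix-measurability of $\mathbf 1\{\ell\le|a_t|\}$ together with the tower property to replace the token surprisal by $\mathcal H_\ell$; you then condition on the visited state $s_t$ under on-policy rollouts to get the policy-level identity. Your additional conventions (the end-of-sequence token treated as a vocabulary element so that $|a_t|$ is a stopping time, and the bounds $|a_t|\le n$ and $\mathcal H_\ell\le\log|\mathcal V|$ for integrability) only make explicit what the paper's argument assumes implicitly.
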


To analyze how a sampled response and its advantage reshape the policy distribution from an information-theoretic perspective, we formulate the policy given state $s$ on the probability simplex $\Delta^\circ(\mathcal A_s)$ equipped with the Fisher-Rao metric (\cite{amari2000methods}, \cite{nielsen2020elementary}), this canonical information metric is the local quadratic form of KL divergence (Details in Appendix~\ref{appendix:simplex}). 
Within this geometry, the natural gradient~\cite{kakade2001natural} induces parameterization-invariant policy updates. By analyzing response-level entropy dynamics and aggregating them over visited states, the following theorem shows that the entropy dynamics \(\Hpolicy\) is governed by the advantage and relative surprisal of sampled responses.
\begin{figure}[t]
    \centering
    \includegraphics[width=0.7\linewidth]{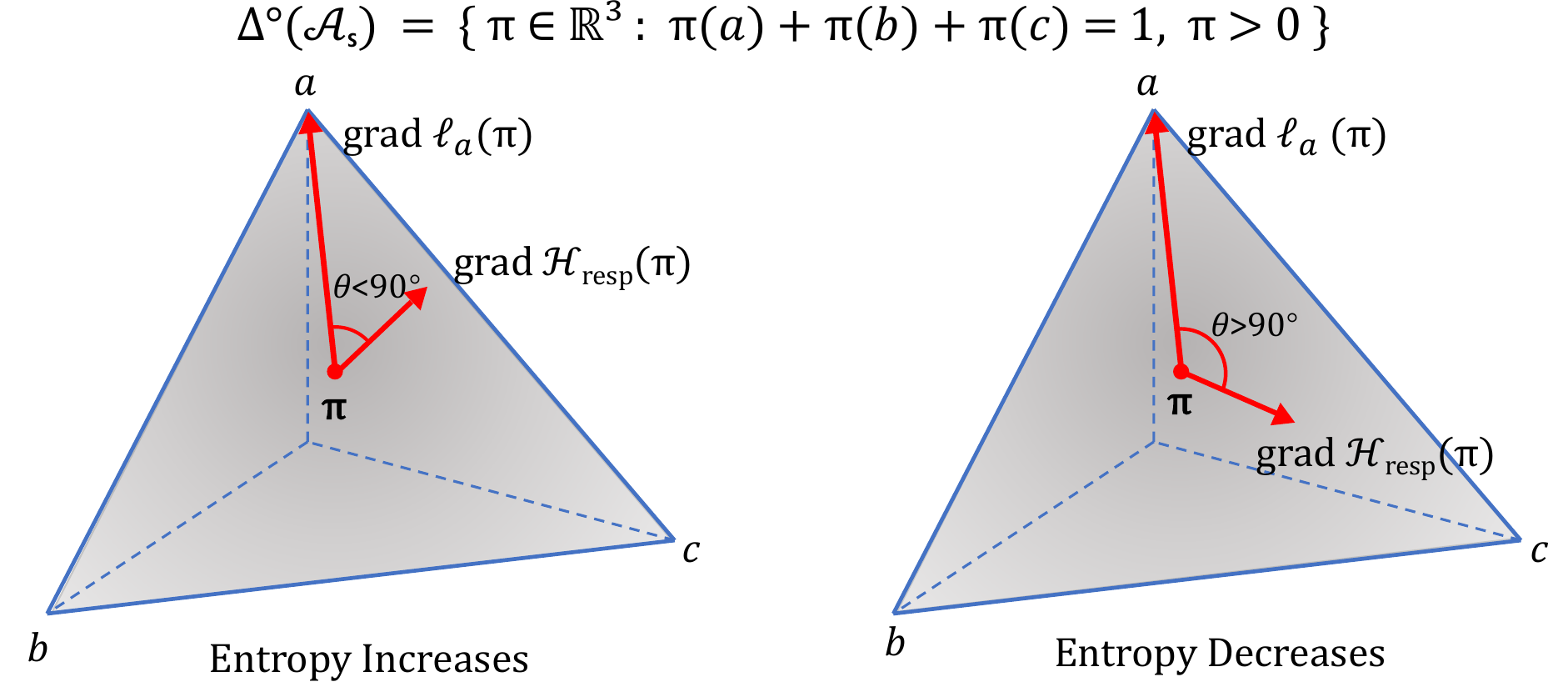}
    \caption{An example on a three-response policy simplex: entropy increases along the training direction when $D_{\mathrm{RL}}(a;s) > 0$ i.e., $\theta_{\left<\operatorname{grad}^F\ell_a,\operatorname{grad\Hresp}\right>} < 90^\circ$, and decreases otherwise.}
    \label{fig:simplex-entropy-drift}
    \vspace{-1.0em}
\end{figure}
\begin{theorem}[Entropy drift under fixed occupancy. Proved in Appendix~\ref{appendix:fisher}]
\label{thm:fisher}
Let $\operatorname{grad}^F$ denote the natural gradient on the policy simplex $\Delta^\circ(\mathcal A_s)$, then the directional derivative of $\Hresp$ along the update direction $\operatorname{grad}^F \ell_a(\pi)$ satisfies
\begin{align}
D_{\mathrm{RL}}^{\mathrm{resp}}(a;s)
&:=
\left\langle
\operatorname{grad}^F \Hresp(\pi),\,
\operatorname{grad}^F \ell_a(\pi)
\right\rangle_{\mathrm{Fisher\text{-}Rao}}
=
A(a,s)\bigl(S(a\mid s)-\Hresp(s)\bigr).
\label{eq:thm:fisher1}
\end{align}
Assume a local policy update under a frozen rollout distribution, i.e., when differentiating the policy entropy objective, we do not propagate gradients through the rollout distribution $P_\theta$. Then the policy entropy drift induced by a sampled response $a$ equals the visitation-weighted expectation of the response-level entropy drift:
\begin{align}
D_{\mathrm{RL}}(a;s)
&:=
\left\langle
\operatorname{grad}^F \Hpolicy(\pi),\,
\operatorname{grad}^F \ell_a(\pi)
\right\rangle_{\mathrm{Fisher\text{-}Rao}}\notag\\
&=
\sum_{t=0}^{T-1} \mathbb P_{s_0 \sim \mathcal D, \tau \sim P_\theta}\left[s_t = s\right]\,A(a,s)\bigl(S(a\mid s)-\Hresp(s)\bigr).
\label{eq:thm:fisher2}
\end{align}
Therefore, the entropy dynamics in training is determined by advantage of sampled response $A(a,s)$ and relative surprisal $S(a\mid s) - \Hresp$ (see Figure~\ref{fig:simplex-entropy-drift}):
\begin{align}\label{eq:dynamics}
    &\operatorname{sgn}(A(a,s)(S(a\mid s) - \Hresp)) > 0 \implies \text{entropy increases}; \notag\\
    &\operatorname{sgn}(A(a,s)(S(a\mid s) - \Hresp)) < 0 \implies \text{entropy decreases}.
\end{align}
\end{theorem}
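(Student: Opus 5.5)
The plan is to work on the open simplex $\Delta^\circ(\mathcal A_s)$ for a fixed state $s$, viewing $\pi=\pi(\cdot\mid s)$ as the coordinates, and to use the Fisher--Rao metric $g_\pi(u,v)=\sum_{b}u_b v_b/\pi_b$ on tangent vectors $u,v$ with $\sum_b u_b=\sum_b v_b=0$. Tangent vectors live in the zero-sum subspace. For a smooth function $f$ on the simplex with Euclidean partial derivatives $\partial_b f$, I will first derive the natural gradient
\[
(\operatorname{grad}^F f)_b=\pi_b\bigl(\partial_b f-\textstyle\sum_c \pi_c\partial_c f\bigr).
\]
To do this, I require $g_\pi(\operatorname{grad}^F f,v)=\sum_b \partial_b f\, v_b$ for every zero-sum $v$. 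The centering term is exactly what makes $\operatorname{grad}^F f$ tangent. As a consequence, for any two functions,
\[
\langle \operatorname{grad}^F f,\operatorname{grad}^F h\rangle=\operatorname{Cov}_{b\sim\pi}\bigl(\partial_b f,\partial_b h\bigr).
\]
This covariance identity is the key computational lemma.

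Next I specialize to $f=\Hresp$ and $h=\ell_a$. For $\Hresp=-\sum_b\pi_b\log\pi_b$ we get $\partial_b\Hresp=-\log\pi_b-1=S(b\mid s)-1$. The constant drops out of the covariance. For $\ell_a=A(a,s)\log\pi_a$, with $A$ treated as fixed (it comes from the base estimator and is held constant during the update), we get $\partial_b\ell_a=A(a,s)\,\mathbf 1\{b=a\}/\pi_a$. The covariance is then
\[
A(a,s)\Bigl(\sum_b \pi_b S(b\mid s)\frac{\mathbf 1\{b=a\}}{\pi_a}-\Hresp(s)\cdot 1\Bigr)=A(a,s)\bigl(S(a\mid s)-\Hresp(s)\bigr).
\]
This uses $\mathbb E_\pi[\mathbf 1\{b=a\}/\pi_a]=1$, and it yields \eqref{eq:thm:fisher1}.

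For \eqref{eq:thm:fisher2}, I invoke Theorem~\ref{thm:responselevelentropy}, specifically \eqref{eq:policy}. This writes $\Hpolicy=\sum_{s'}\sum_t \mathbb P[s_t=s']\,\Hresp(s')$. Under the frozen-rollout assumption the weights $\mathbb P[s_t=s']$ are constants. The joint policy then lives on the product manifold $\prod_{s'}\Delta^\circ(\mathcal A_{s'})$ with the product Fisher metric, so gradients decompose blockwise. The natural gradient of $\ell_a$ has a nonzero component only in the block $s$, because $\ell_a$ depends only on $\pi(\cdot\mid s)$. The inner product therefore picks out the $s$-block of $\operatorname{grad}^F\Hpolicy$, which is $\sum_t\mathbb P[s_t=s]\operatorname{grad}^F\Hresp(s)$ by linearity. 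Combining this with the first part gives the weighted formula. The sign statements \eqref{eq:dynamics} then follow because the weights are nonnegative (and positive for visited $s$). Moreover, the first-order change $\frac{d}{d\eta}\Hpolicy(\pi+\eta\operatorname{grad}^F\ell_a)|_{\eta=0}=dH(\operatorname{grad}^F\ell_a)=\langle\operatorname{grad}^F\Hpolicy,\operatorname{grad}^F\ell_a\rangle$.

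The main obstacle I expect is the bookkeeping in the second part: justifying that a state-indexed product metric is the right geometry, and that freezing $P_\theta$ legitimately removes the dependence of the occupancy weights on $\pi$. I would state these as the modeling conventions and then check the blockwise orthogonality carefully. A minor point is that when $\mathcal A_s$ is treated as finite (truncated at length $n$), all sums are finite and interior points are guaranteed by the softmax.
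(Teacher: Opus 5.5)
Your proposal is correct and follows essentially the same route as the paper: the paper also derives $\operatorname{grad}^F f=\pi\odot(\nabla_\pi f-(\pi^\top\nabla_\pi f)\mathbf 1)$, computes $\operatorname{grad}^F\ell_a=A(a,s)(e_a-\pi)$ and $\operatorname{grad}^F\Hresp=\pi\odot(S-\Hresp\mathbf 1)$, and pairs them under the Fisher--Rao metric (your covariance identity is a compact packaging of that pairing), then localizes $\Hpolicy$ to state $s$ via Eq.~\eqref{eq:policy} with frozen occupancy weights. Your product-metric argument for the localization step is a more explicit version of the paper's remark that $\Hresp(s_t)$ for $s_t\neq s$ is constant on $\Delta^\circ(\mathcal A_s)$.
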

\begin{remark}
    In some practical agentic RL settings, the objective is not purely reward-driven: many methods also include entropy regularization or KL penalties. In Appendix~\ref{appendix:fisher}, we extend the theorem to the regularized objective:
\begin{equation}
    \ell_a(\pi_\theta) = A(a,s)\log\pi_\theta + \beta \psi(\Hresp(\pi_\theta)) - \gamma D_{KL}(\pi_\theta\| \pi_{ref}),
\end{equation}
where $\psi$ is a positive increasing function and $\beta,\gamma$ are regularization coefficients.

It is demonstrated that, since these regularization terms act at the state level, they do not change the response-dependent modulation principle implemented by \name.
\end{remark}

Theorem~\ref{thm:fisher} shows that the entropy drift induced by a sampled response is governed by the interaction between its advantage and relative surprisal. This provides a theoretical basis for modulating entropy dynamics through response-level credit signals: by rescaling response advantages according to relative surprisal, one can induce entropy-increasing or entropy-decreasing pressure without changing the underlying RL optimization backbone. This mechanism is intrinsic to policy space and independent of any specific neural parameterization; Appendix~\ref{appendix:parametrized_version} presents its parameter-space counterpart. Motivated by this observation, we next introduce AEM.

\section{AEM: Adaptive Entropy Modulation}
\subsection{What is \name?}
\name{} is a plug-in response-level advantage modulation method applied on top of a base advantage estimator. It leverages a proxy of relative surprisal as an intrinsic signal to regulate entropy dynamics. Let \(A^{\mathrm{base}}_{i,t}\) denote the response-level advantage produced by the base estimator for the \(t\)-th turn in the \(i\)-th rollout \(\mathcal S_i\). Here, \(\mathcal S_i = \{S_{i,1}, \ldots, S_{i,K_i}\}\), where each span \(S_{i,t} = [\text{begin\_token}_{i,t}, \text{end\_token}_{i,t}]\) corresponds to one completed response generated before the next environment transition. For each environment-reactive response span \(S_{i,t}\), \name{} computes a scalar modulation coefficient \(\alpha_{i,t}\) and applies it uniformly to all tokens in the span:
\[
A^{\mathrm{AEM}}_{i,t}
=
\alpha_{i,t} A^{\mathrm{base}}_{i,t}.
\]
Thus, \name{} only rescales response-level advantages, inducing entropy-increasing pressure on negative responses and entropy-decreasing pressure on positive responses. As training progresses and the proportion of positive responses increases, this modulation naturally shifts the dominant entropy pressure from exploration-preserving to exploitation-promoting, enabling an adaptive transition from exploration to exploitation during RL training.


\subsection{Modulation Mechanism}
Since the state-specific baseline $\Hresp(s_t)$ is not directly tractable during training, AEM does not explicitly reconstruct the exact gap. Instead, it converts the relative magnitude within the group of this proxy into a modulation coefficient $\alpha$, so that $\alpha>1$ and $\alpha<1$ serve as practical indicators of lower- and higher-surprisal responses.

Given the $t$-th response in a rollout, Theorem~\ref{thm:fisher} shows that the sign of the local entropy drift is jointly governed by the relative surprisal $S(a_t\mid s_t)-\Hresp(s_t)$ and the response advantage $A(a_t,s_t)$. To reduce the sensitivity to the particular sampled tokens, we use the predictable proxy $\sum_{\ell=1}^{|a|}\mathcal H_\ell(a_t,s_t)$ for $S(a_t|s_t)$ from Doob's decomposition (see Appendix~\ref{appendix:doob} for details).

With a length normalization to make the response-level entropy scale-free, we consider 
\begin{equation}\label{eq:proxy}
\bar{\mathcal H}_{i,t}
=
\frac{1}{|S_{i,t}|}
\sum_{\ell\in S_{i,t}} \mathcal H_{\ell}(a_t,s_t),
\end{equation}
and apply a monotone decreasing map from 
$\bar{\mathcal H}_{i,t}$ to a response-uniform coefficient $\alpha_{i,t}$.

Let $\mathcal G$ be a group as the set of all responses in the trajectories generated by a prompt.
We normalize ${\mathcal H}_{i,t}$ within group min-max scaling to avoid numerical explosion:
\begin{equation}
\tilde{\mathcal H}_{i,t}
=
\frac{
\bar{\mathcal H}_{i,t}-\min_{(j,n)\in\mathcal G}\bar{\mathcal H}_{j,n}
}
{
\max_{(j,n)\in\mathcal G}\bar{\mathcal H}_{j,n}
-
\min_{(j,n)\in\mathcal G}\bar{\mathcal H}_{j,n}
+\varepsilon}, \quad \text{for } (i,t) \in \mathcal G.
\end{equation}
When \(\max_{(j,n)\in\mathcal G}\bar{\mathcal H}_{j,n}-\min_{(j,n)\in\mathcal G}\bar{\mathcal H}_{j,n} < 0.1\), we set $\alpha_{i,t} = 1$ to avoid sampling noise. Otherwise, we define the self-calibrated modulation coefficient with temperature $\lambda$:
\begin{equation}
\alpha_{i,t}
=
\frac{
\exp(-\lambda \tilde{\mathcal H}_{i,t})
}{
\frac{1}{|\mathcal G|}\sum_{(j,n)\in\mathcal G} \exp(-\lambda \tilde{\mathcal H}_{j,n})+\varepsilon
},\quad  \text{for } (i,t) \in \mathcal G.
\end{equation}
Hence AEM relatively upweights ($\alpha>1$) spans with lower relative surprisal proxy within the group, and downweights ($\alpha<1$) those with higher relative surprisal proxy, while preserving the overall modulation scale through self-calibration. Ablation studies in Appendix~\ref{appendix:ablation} demonstrate the importance of correct direction of entropy-aware credit assignment and group normalization in AEM.

\subsection{Exploration-Exploitation Transition}
\hyperref[sec:consistency]{Analysis A} shows that $\alpha - 1$ has a strong correlation with $-(S - \Hresp)$, providing empirical support for the theoretical connection in Eq.~\eqref{eq:dynamics}. \hyperref[sec:entropy_illustration]{Analysis B} further demonstrates that $A(a,s)$ and $\alpha - 1$ indeed determine the practical entropy dynamics $\tilde D_{RL}^{\mathrm{base}}(a;s)$:
\begin{equation}\label{eq:practical}
    \operatorname{sgn}\tilde D_{RL}^{\mathrm{base}}(a;s) \approx -\operatorname{sgn}\left(A(a,s)(\alpha-1)\right).
\end{equation}

Generally, AEM systematically shifts the intrinsic entropy drift based purely on the sign of the advantage:\begin{equation}
    \operatorname{sgn}\bigl(\tilde D_{\mathrm{RL}}^{\mathrm{AEM}}-\tilde D_{\mathrm{RL}}^{\mathrm{base}}\bigr)
=
-\operatorname{sgn}\bigl((\alpha-1)^2A(a,s)\bigr) = -\operatorname{sgn}A(a,s).
\end{equation}
By Eq.~\eqref{eq:policy} in Theorem~\ref{thm:responselevelentropy}, through modulating the entropy drift of relatively many responses, AEM induces a corresponding shift in the policy entropy.
As training progresses, it naturally induces an implicit transition from exploration to exploitation:

\textbf{Exploration.} For negative responses $A(a,s)<0$ which are relatively prevalent in early stage of RL training, AEM provides entropy-increasing pressure:
\begin{equation}
\begin{cases}
\bar{\mathcal H}_{i,t}\ \text{relatively large}
    \implies
    \alpha_{i,t} < 1,\tilde D_{\mathrm{RL}}^{\mathrm{base}} < 0
    \implies
    \text{attenuate entropy-decreasing},\\[0.4em]
\bar{\mathcal H}_{i,t}\ \text{relatively small}
    \implies
    \alpha_{i,t} > 1,\tilde D_{\mathrm{RL}}^{\mathrm{base}} > 0
    \implies
    \text{amplify entropy-increasing}.
\end{cases}
\end{equation}

\textbf{Exploitation.} For positive responses $A(a,s)>0$ which are relatively prevalent in late stage of RL training, AEM provides entropy-decreasing pressure:
\begin{equation}
\begin{cases}
\bar{\mathcal H}_{i,t}\ \text{relatively large}
    \implies
    \alpha_{i,t} < 1,\tilde D_{\mathrm{RL}}^{\mathrm{base}} > 0
    \implies
    \text{attenuate entropy-increasing},\\[0.4em]
\bar{\mathcal H}_{i,t}\ \text{relatively small}
    \implies
    \alpha_{i,t} > 1,\tilde D_{\mathrm{RL}}^{\mathrm{base}} < 0
    \implies
    \text{amplify entropy-decreasing}.
\end{cases}
\end{equation}

\hyperref[sec:exploration]{Analysis C} shows that AEM mitigates early entropy collapse, promotes more complete late-stage convergence, and improves final performance.

\section{Experiments}
\label{sec:experiments}
Subsection~\ref{sec:setup} introduces the benchmarks and baseline methods. Appendix~\ref{appendix:implementation} shows the implementation details used in our experiments. Subsection~\ref{sec:performance} reports the empirical results of AEM when integrated with different baselines across benchmarks. Subsection~\ref{sec:analysis} analyzes the mechanism underlying AEM, including the consistency between its modulation coefficient and relative surprisal, the resulting entropy dynamics, and the induced exploration-exploitation transition during training. For all experiments in subsection~\ref{sec:analysis}, we use Qwen2.5-1.5B on WebShop, with GRPO as the base estimator.
Subsection~\ref{subsec:comp_cost} analyzes the computational cost of \name.
Finally, Appendix~\ref{appendix:ablation} presents ablation studies comparing AEM with several design variants.

\subsection{Setup}\label{sec:setup}
\paragraph{Benchmarks.}
We evaluate \name on three challenging multi-turn LLM agent benchmarks: ALFWorld~\citep{ALFWorld20}, WebShop~\citep{yao2022webshop}, and SWE-bench-Verified~\citep{jimenez2024swebench}. ALFWorld evaluates text-based embodied decision-making across six household task categories: Pick \& Place (Pick), Examine in Light (Look), Clean \& Place (Clean), Heat \& Place (Heat), Cool \& Place (Cool), and Pick Two \& Place (Pick2). WebShop evaluates web-based shopping agents in a simulated HTML environment with large-scale product search, navigation, and item selection. SWE-bench-Verified is a curated subset of SWE-bench with expert-validated tasks, stable environments, and verifiable solutions for evaluating software engineering agents.

\paragraph{Baselines.}
For ALFWorld and WebShop, we compare \name against several competitive baselines, including: (1) closed-source LLMs: GPT-5.2-Pro~\citep{gpt5-2} and Gemini-3-Pro~\citep{gemini3pro};
(2) prompting-based methods: ReAct~\citep{yao2023react}, which interleaves reasoning traces and executable actions to enable step-by-step decision-making in interactive environments;
(3) reinforcement learning methods: PPO~\citep{schulman2017ppo}, GRPO~\citep{shao2024deepseekmath},
DAPO~\citep{yu2025dapo},
GSPO~\citep{zheng2025group}. The algorithmic details of these baselines are shown in Appendix~\ref{appendix:baselines}. 
To further validate the generality of \name{} in complex agentic RL scenarios, we integrate it into DeepSWE~\citep{deepswe2025}, a state-of-the-art open-source RL framework for multi-turn software-engineering agents. DeepSWE adapts GRPO to SWE agent training with a GRPO++ recipe that improves long-horizon optimization via clip-higher, removal of KL and entropy losses, mitigation of difficulty and length biases, leave-one-out advantage estimation, and compact trajectory filtering.
Full implementation details are deferred to Appendix~\ref{appendix:implementation}.

\begin{table}[t]
\centering
\caption{Performance comparison on ALFWorld and WebShop benchmarks. The results of ReAct and PPO are adopted from~\cite{fenggroup}.}
\small
\renewcommand{\arraystretch}{1.0}
\setlength{\tabcolsep}{6.6pt}
\begin{tabularx}{\linewidth}{X ccccccc|cc}
\toprule
\multirow{2}{*}{Method} 
& \multicolumn{7}{c}{ALFWorld} 
& \multicolumn{2}{c}{WebShop} \\
\cmidrule(lr){2-8} \cmidrule(lr){9-10}
& Pick & Look & Clean & Heat & Cool & Pick2 & All & Score & Succ. (\%)\\
\midrule

\multicolumn{10}{c}{\textit{Closed-Source Model}} \\

GPT-5.2-Pro 
& 100 & 100 & 100 & 61.3 & 87.0 & 100 & 88.8
& 44.4 & 46.6 \\

Gemini-3-Pro 
& 100 & 100 & 96.8 & 100 & 100 & 100 & 99.3 
& 56.7 & 60.8 \\

\midrule
\multicolumn{10}{c}{\textit{Qwen2.5-1.5B-Instruct}} \\


ReAct 
& 17.4 & 20.5 & 15.7 & 6.2 & 7.7 & 2.0 & 12.8 
& 40.1 & 11.3 \\

PPO
& 64.8 & 40.5 & 57.1
& 60.6 & 46.4 & 47.4 & 54.4$_{\pm3.1}$ 
& 73.8$_{\pm3.0}$ & 51.5$_{\pm2.9}$ \\

GRPO 
& 78.2 & 49.9 & 70.5
& 72.0 & 75.0 & 39.2 & 68.0$_{\pm0.8}$
& 83.6$_{\pm0.2}$ & 65.0$_{\pm0.6}$ \\

\cellcolor{blue!10}$\quad+$AEM 
& \cellcolor{blue!10}88.6
& \cellcolor{blue!10}67.6
& \cellcolor{blue!10}76.4
& \cellcolor{blue!10}60.9
& \cellcolor{blue!10}76.7
& \cellcolor{blue!10}69.9
& \cellcolor{blue!10}\textbf{76.8$_{\pm1.8}$} 
& \cellcolor{blue!10}\textbf{86.4$_{\pm2.1}$} 
& \cellcolor{blue!10}\textbf{70.6$_{\pm2.4}$}  \\

GSPO 
& 75.4 & 54.2 & 64.6 
& 70.0 & 74.6 & 30.0 & 66.7$_{\pm 5.3}$
& 75.1$_{\pm7.1}$ & 61.5$_{\pm4.5}$ \\

\cellcolor{blue!10}$\quad+$AEM 
& \cellcolor{blue!10}75.5 & \cellcolor{blue!10}56.5 & \cellcolor{blue!10}78.1 
& \cellcolor{blue!10}75.0 & \cellcolor{blue!10}70.2 & \cellcolor{blue!10}46.7 & \cellcolor{blue!10}\textbf{71.9$_{\pm 8.4}$}
& \cellcolor{blue!10}\textbf{76.3$_{\pm3.8}$} & \cellcolor{blue!10}\textbf{66.9$_{\pm3.2}$} \\

DAPO
& 100.0 & 70.3 & 90.6
& 91.3 & 86.6 & 82.9 & 88.5$_{\pm1.2}$ 
& 86.5$_{\pm0.9}$ & 75.9$_{\pm2.9}$ \\

\cellcolor{blue!10}$\quad+$AEM
& \cellcolor{blue!10}97.3
& \cellcolor{blue!10}90.3
& \cellcolor{blue!10}98.8
& \cellcolor{blue!10}98.4
& \cellcolor{blue!10}90.9
& \cellcolor{blue!10}89.5
& \cellcolor{blue!10}\textbf{94.5$_{\pm1.4}$} 
& \cellcolor{blue!10}\textbf{88.0$_{\pm1.0}$}
& \cellcolor{blue!10}\textbf{78.5$_{\pm1.0}$} \\

\midrule
\multicolumn{10}{c}{\textit{Qwen2.5-7B-Instruct}} \\
ReAct
& 48.5 & 35.4 & 34.3 & 13.2 & 18.2 & 17.6 & 31.2 
& 46.2 & 19.5 \\

PPO
& 92.3 & 64.0 & 92.5 
& 89.5 & 80.3 & 68.8 & 80.4$_{\pm2.7}$
& 81.4$_{\pm3.1}$ & 68.7$_{\pm5.1}$ \\

GRPO 
& 91.3 & 91.5 & 79.9
& 76.9 & 75.2 & 44.3 & 78.7$_{\pm1.6}$
& 84.1$_{\pm 2.5}$ & 75.9$_{\pm 3.4}$ \\

\cellcolor{blue!10}$\quad+$AEM 
& \cellcolor{blue!10}98.9
& \cellcolor{blue!10}78.6
& \cellcolor{blue!10}89.4
& \cellcolor{blue!10}84.1
& \cellcolor{blue!10}79.5
& \cellcolor{blue!10}65.7
& \cellcolor{blue!10}\textbf{84.4$_{\pm3.1}$}
& \cellcolor{blue!10}\textbf{86.9$_{\pm1.4}$}
& \cellcolor{blue!10}\textbf{80.5$_{\pm2.1}$} \\

GSPO
& 95.1 & 66.9 & 73.9 
& 80.0 & 79.8 & 69.7 & 80.7$_{\pm2.3}$
& 80.4$_{\pm 1.9}$ & 71.6$_{\pm4.6}$  \\

\cellcolor{blue!10}$\quad+$AEM 
& \cellcolor{blue!10}88.9 & \cellcolor{blue!10}56.8 & \cellcolor{blue!10}92.6 
& \cellcolor{blue!10}85.2 & \cellcolor{blue!10}84.8 & \cellcolor{blue!10}78.3 & \textbf{\cellcolor{blue!10}83.4$_{\pm 3.1}$}
&  \cellcolor{blue!10}\textbf{81.9$_{\pm 1.0}$} & \cellcolor{blue!10}\textbf{72.1$_{\pm3.0}$}\\

DAPO
& 100.0 & 96.3 & 100.0
& 94.7 & 90.3 & 94.3 & 96.1$_{\pm2.1}$
& 93.7$_{\pm0.5}$ & 86.7$_{\pm1.4}$ \\

\cellcolor{blue!10}$\quad+$AEM
& \cellcolor{blue!10}99.0
& \cellcolor{blue!10}91.7
& \cellcolor{blue!10}100.0
& \cellcolor{blue!10}96.3
& \cellcolor{blue!10}95.2
& \cellcolor{blue!10}93.2
& \cellcolor{blue!10}\textbf{96.6$_{\pm0.7}$}
& \cellcolor{blue!10}\textbf{94.5$_{\pm1.0}$}
& \cellcolor{blue!10}\textbf{88.9$_{\pm0.9}$} \\
\bottomrule
\end{tabularx}
\label{tab:alfworld_webshop}
\end{table}
\subsection{Overall Performance}\label{sec:performance}

\paragraph{Performance on ALFWorld and WebShop.}

Table~\ref{tab:alfworld_webshop} reports the overall results of applying \name to different baselines on ALFWorld and WebShop.
Overall, AEM consistently improves group-based RL baselines across both benchmarks and model scales, and in several settings achieves performance competitive with strong closed-source models. These results validate adaptive entropy modulation as an effective plug-in mechanism for multi-turn agent training.
By modulating advantages with response-level uncertainty, \name provides denser credit assignment for GRPO and yields consistent gains of 8.8\% (5.7\%) and 5.6\% (4.6\%) on ALFWorld and WebShop, respectively, using 1.5B (7B) models without any extra supervision.
The results show that DAPO provides a stronger group-based optimization backbone than GRPO. Nevertheless, DAPO still benefits from \name{}, achieving additional gains of up to 6.0\%, suggesting that modulation of entropy-aware responses remains complementary even to more advanced optimization backbones: DAPO improves \emph{the} way updates are performed, while \name{} refines \emph{which responses} should receive stronger learning signals during training.
Moreover, \name{} further improves GSPO by up to 5.4\%, suggesting that entropy-aware credit assignment remains complementary even when applied on top of response-level optimization.
The training curves are deferred to Appendix~\ref{appendix:curves}.

\paragraph{Performance on SWE-bench-Verified.}
To further validate the effectiveness of \name in larger-scale models and more challenging tasks, we evaluated it on SWE-bench-Verified and compared it with DeepSWE.
DeepSWE performs RL on Qwen3-32B using the R2E dataset~\citep{jain2024r2e}, and reports a 42.2\% resolved rate on SWE-bench-Verified at the time of release. In our reproduction, DeepSWE achieves an average resolved rate of 42.3\%, serving as a strong baseline for evaluating \name.
\begin{wraptable}{r}{0.34\textwidth}
    \vspace{-0.0em}
    \centering
    \small
    \caption{SWE-bench-Verified results with Qwen3-32B.}
    \label{tab:swebench_main}
    \begin{tabular}{lc}
        \toprule
        Method & Resolved (\%) \\
        \midrule
        DeepSWE & 42.3$_{\pm0.3}$ \\
        \cellcolor{blue!10}DeepSWE+\name & \cellcolor{blue!10}\textbf{43.7$_{\pm0.4}$} \\
        \bottomrule
    \end{tabular}
    \vspace{-0.6em}
\end{wraptable}
As shown in Table~2, DeepSWE+AEM achieves a resolved rate of 43.7\%, outperforming DeepSWE by 1.4\%. SWE-bench-Verified is substantially more challenging than ALFWorld and WebShop, with large solution spaces, and open-ended software environments. Improvements on this benchmark suggest that \name{} remains effective beyond controlled agent benchmarks, extending to realistic multi-turn settings that resemble production workloads.

\subsection{Analysis}\label{sec:analysis}
\noindent

\phantomsection
\label{sec:consistency}
\begin{wrapfigure}{l}{0.45\linewidth}
    \centering
    \vspace{-1.5em}
    \includegraphics[width=0.95\linewidth]{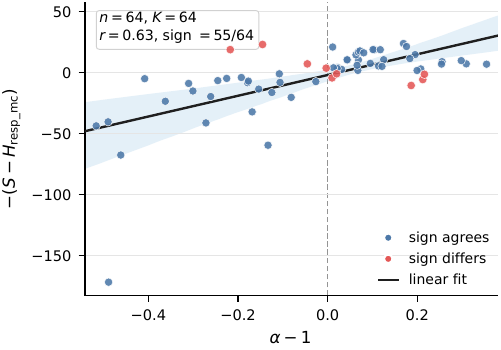}
    \vspace{-0.6em}
    \captionof{figure}{Empirical relationship between $\alpha-1$ and the Monte Carlo relative surprisal $-(S-H_{\mathrm{resp}}^{\text{MC}})$.}
    \label{fig:probe}
    \vspace{-0.5em}
\end{wrapfigure}
\paragraph{Analysis A: Consistency between $\alpha-1$ and $-(S - \Hresp)$.}
To examine whether $\alpha - 1$ matches the sign of $-(S - \Hresp)$, we conduct a Monte Carlo probing study on the relationship between $\alpha-1$ and $S(a\mid s)-H_{\mathrm{resp}}(s)$. We probe $n = 64$ states, and for each state we sample $K = 64$ responses to estimate the Monte Carlo (MC) response-level surprisal
\(
\mathcal H_{\mathrm{resp}}^{\text{MC}}(s)=\frac{1}{K}\sum_{j=1}^{K} S(a_j\mid s).
\)
We then compare $\alpha-1$ with MC relative surprisal
\begin{equation*}
    \Delta S^{\text{MC}} := -\bigl(S(a\mid s)-\mathcal H_{\mathrm{resp}}^{\text{MC}}(s)\bigr) \approx -(S-\Hresp).
\end{equation*}
As illustrated in Figure~\ref{fig:probe}, $\alpha-1$ shows a clear positive relationship with this quantity, with Pearson correlation $r=0.63$. Moreover, the sign of $\alpha-1$ agrees with the sign of $\Delta S^{\text{MC}}$ in 55 out of 64 states (85.9\%). These results suggest that $\alpha-1$ is strongly consistent with $\Delta S^{\text{MC}}$ and provides a practical
partition of responses into the two sides of the relative surprisal.

\phantomsection
\label{sec:entropy_illustration}
\begin{wrapfigure}{r}{0.45\linewidth}
    \centering
    \vspace{1.0em}
    \includegraphics[width=0.95\linewidth]{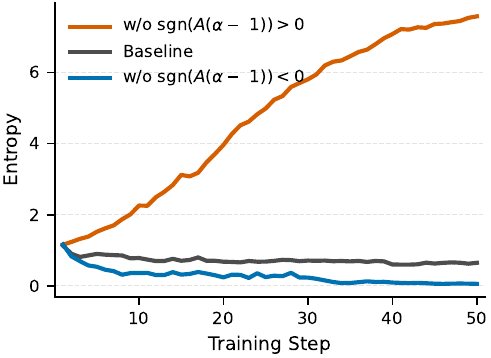}
    \vspace{-0.6em}
    \captionof{figure}{Two masking strategies lead to clearly diverging entropy trends.}
    \label{fig:entropy_illustration}
    \vspace{-1.5em}
\end{wrapfigure}

\paragraph{Analysis B: Validating the trend of entropy.}

To further illustrate how $A(\alpha-1)$ governs the direction of entropy dynamics, 
Figure~\ref{fig:entropy_illustration} visualizes the entropy trend over the first 50 training steps under two gradient-masking strategies during GRPO training.
Masking the responses such that $(\alpha>1, A>0)$ and $(\alpha<1, A<0)$, i.e.,
\begin{gather*}
\text{Masking }\operatorname{sgn}\tilde D_{RL}
=
-\operatorname{sgn}(A(\alpha-1))
=-1\\
\implies\text{entropy increases.}
\end{gather*}

whereas masking the responses with opposite sign leads to entropy decrease:
\begin{gather*}
\text{Masking }\operatorname{sgn}\tilde D_{RL}
=
\operatorname{sgn}(A(\alpha-1))
=1\\
\implies\text{entropy decreases.}
\end{gather*}

With the result of \hyperref[sec:consistency]{Analysis A}, this pattern is consistent with Eq.~\eqref{eq:practical}, 
suggesting that the trend of entropy dynamics is jointly determined by \(A(a)\) and \(\alpha-1\).

\paragraph{Analysis C: AEM induces an exploration-exploitation transition.}

Figure~\ref{fig:six-entropy} shows the entropy dynamics across multiple runs. The baseline exhibits an abrupt entropy collapse at the beginning of training and then remains in a relatively flat entropy regime, indicating premature concentration and limited late-stage optimization. In contrast, AEM consistently preserves higher entropy in the early stage and gradually reduces it to a lower range later, suggesting a systematic transition from exploration to exploitation rather than an isolated run-specific effect.

To better understand this transition, Figure~\ref{fig:exploration} overlays entropy with success rate for a representative pair of runs. AEM maintains higher entropy early on, promoting response diversity. As the success rate increases during training, the training batches contain a growing proportion of positive samples relative to negative ones, under which AEM gradually transitions from entropy-increasing to entropy-decreasing dynamics adaptively. This enables the policy to exploit the diversity accumulated during early exploration and achieve a higher final success rate. In contrast, the baseline collapses entropy prematurely but shows limited further improvement, remaining in a locally suboptimal regime.

\phantomsection
\label{sec:exploration}
\noindent
\begin{minipage}[t]{0.48\linewidth}
    \centering
    \includegraphics[width=\linewidth]{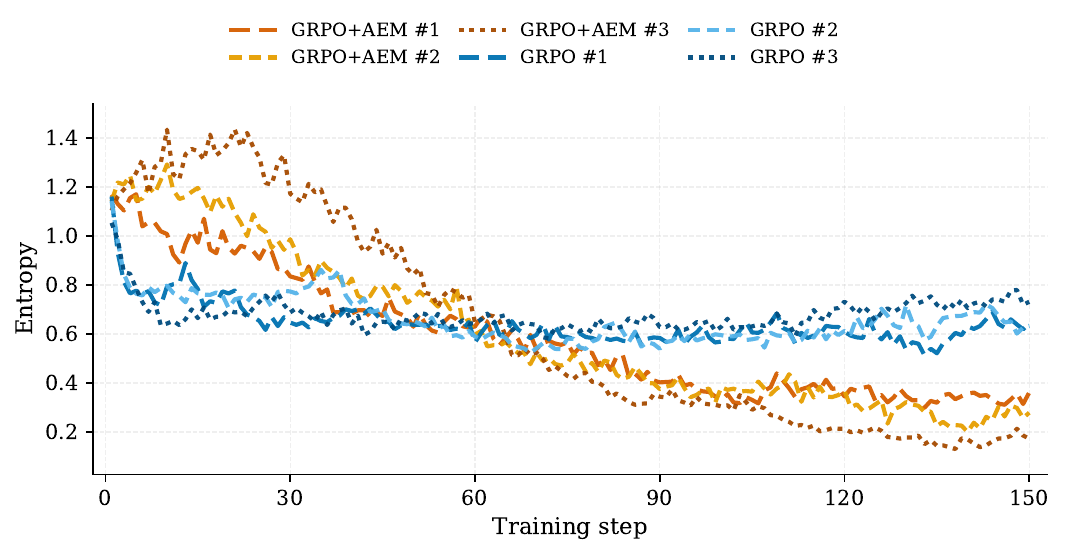}
    \captionof{figure}{Entropy trajectories over training for GRPO and GRPO+AEM.}
    \label{fig:six-entropy}
\end{minipage}\hfill
\begin{minipage}[t]{0.48\linewidth}
    \centering
    \includegraphics[width=\linewidth]{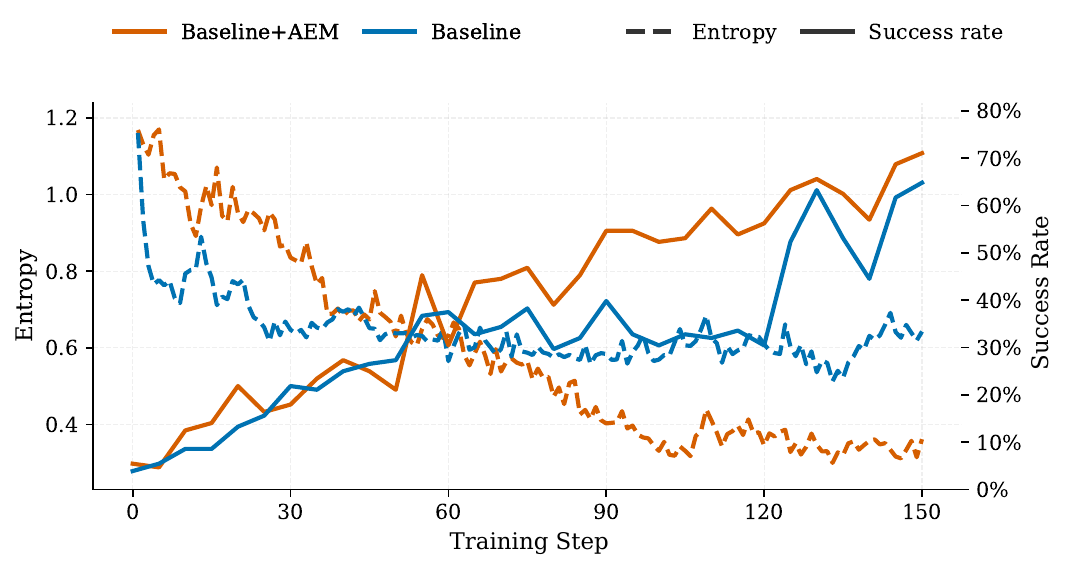}
    \captionof{figure}{Entropy and success-rate dynamics for one pair of runs.}
    \label{fig:exploration}
\end{minipage}

\begin{wrapfigure}{r}{0.48\textwidth}
    \centering
    \vspace{-3.0em}
    \includegraphics[width=0.48\textwidth]{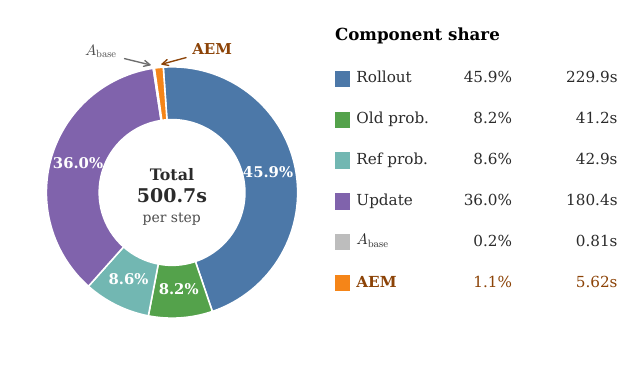}
    \vspace{-3.0em}
    \caption{Training time breakdown of GRPO+\name.}
    \label{fig:aem_overhead}
    \vspace{-2.0em}
\end{wrapfigure}
\subsection{Computational Cost}
\label{subsec:comp_cost}
This section analyzes the additional computational overhead introduced by \name. The extra cost is limited to lightweight response-level uncertainty estimation and modulation, including response-level entropy aggregation, group-wise normalization, and advantage rescaling. 
Importantly, \name requires neither extra rollouts nor additional policy or reference model forward passes. The entropy values used by \name are obtained during the same recomputation pass used to compute old-policy log-probabilities, and therefore incur no additional model forward pass. 
Figure~\ref{fig:aem_overhead} reports a detailed per-iteration latency breakdown for training Qwen2.5-1.5B on ALFWorld with GRPO+\name. The overall training time is dominated by rollout generation, model updates, and log-probability computation, which account for approximately 45.9\%, 36.0\%, and 16.8\% of the policy latency, respectively. In contrast, \name-specific computations account for only 1.1\%, indicating that \name introduces negligible overhead in practice.

\section{Conclusions}

This paper presents \name, a supervision-free credit assignment method for multi-turn agentic RL that uses response-level entropy as an intrinsic signal.
Our analysis shows that entropy dynamics are governed by the interaction between advantage and relative response surprisal, which motivates an adaptive entropy modulation rule to regulate entropy dynamics and enables a natural transition from exploration to exploitation. As a lightweight plug-in to existing advantage estimators, it improves credit assignment without auxiliary models, dense supervision, or restrictive structural assumptions. Across ALFWorld, WebShop, and SWE-bench-Verified, \name consistently improves strong baselines, mitigates premature entropy collapse, and yields stronger final performance. These results highlight response-level entropy not only as a useful lens for understanding multi-turn agent training, but also as a practical mechanism for adaptive exploration-exploitation control.

\section*{Acknowledgements}
We sincerely thank Peng Li from the Institute for AI Industry Research (AIR), Tsinghua University, for his valuable suggestions and insightful discussions, which helped improve the motivation, theoretical development, and presentation of this work. We also thank Mingzhe Lu from the University of Chinese Academy of Sciences for his valuable advice on refining the paper presentation. Songlin Zhou sincerely thanks Annan Li and Xiaomin Yuan from the Baidu FAMOU Institute for their generous encouragement and invaluable support in pursuing this project. Lastly, Haotian Zhao thanks Xiaofeng Wang for all the things.




{\small
\bibliographystyle{plainnat}
\bibliography{refs}
}

\appendix
\newpage
\section{Pseudo-code Algorithm}\label{appendix:AEM}
The pseudo-code algorithm of \name is illustrated in Algorithm~\ref{alg:AEM}.
\begin{algorithm}
\caption{\name}
\label{alg:AEM}
\begin{algorithmic}[1]
\REQUIRE Sampled rollouts $\{\tau_i\}_{i=1}^B$ in the current batch $\mathcal B$, $t$-th response $(i,t)$ in $i$-th trajectory, entropy $\{\mathcal{H}_{\ell,t}^i\}$ of $\ell$-th token in response $(i,t)$, base advantages $\{A^{\mathrm{base}}_{i,t}\}$, temperature $\lambda$, stability constant $\varepsilon$.
\ENSURE Modulated AEM advantages $\{A^{\mathrm{AEM}}_{i,t}\}$
\STATE Parse rollouts $\{\tau_i\}$ into environment-reactive agentic responses $\mathcal{S}_i = \{S_{i,1}, \dots, S_{i,K_i}\}$
\FORALL{rollout $i$ and response $t \in \{1,\dots,K_i\}$}
    \STATE Compute response-level uncertainty proxy: $\bar{\mathcal{H}}_{i,t} \leftarrow \frac{1}{|S_{i,t}|} \sum_{\ell \in S_{i,t}} \mathcal{H}_{\ell,t}^i$
\ENDFOR
\FORALL {groups $\mathcal G \subset \mathcal B$}
    \STATE Find group extrema: $\bar{\mathcal{H}}^{\min}_\mathcal G \leftarrow \min_{(j,n) \in \mathcal G} \bar{\mathcal{H}}_{j,n}$ \textbf{and} $\bar{\mathcal{H}}^{\max}_\mathcal G \leftarrow \max_{(j,n) \in \mathcal G} \bar{\mathcal{H}}_{j,n}$
    \IF{$\bar{\mathcal{H}}^{\max}_\mathcal G - \bar{\mathcal{H}}^{\min}_\mathcal G < 0.1$}
        \FORALL{responses $(i,t) \in \mathcal G$}
            \STATE Set the coefficient: $\alpha_{i,t} \leftarrow 1$
        \ENDFOR
    \ELSE
        \FORALL{responses $(i,t) \in \mathcal G$}
            \STATE Min-max normalization: $\tilde{\mathcal{H}}_{i,t} \leftarrow (\bar{\mathcal{H}}_{i,t} - \bar{\mathcal{H}}^{\min}_\mathcal G) \,/\, (\bar{\mathcal{H}}^{\max}_\mathcal G - \bar{\mathcal{H}}^{\min}_\mathcal G + \varepsilon)$
            \STATE Compute raw modulation coefficient: $\alpha_{i,t} \leftarrow \exp(-\lambda \tilde{\mathcal{H}}_{i,t})$
        \ENDFOR
        \STATE Compute group-average coefficient: $\bar{\alpha}_\mathcal G \leftarrow \frac{1}{|\mathcal G|} \sum_{(j,n) \in \mathcal G} \alpha_{j,n}$
        \FORALL{responses $(i,t) \in \mathcal G$}
            \STATE $\alpha_{i,t} \leftarrow \alpha_{i,t} \,/\, (\bar{\alpha}_\mathcal G + \varepsilon)$
        \ENDFOR
    \ENDIF
\ENDFOR
\FORALL{rollouts $i$, responses $t$}
    \STATE Apply response-level uniform modulation: $A^{\mathrm{AEM}}_{i,t} \leftarrow \alpha_{i,t} A^{\mathrm{base}}_{i,t}$
\ENDFOR
\STATE \textbf{return} $\{A^{\mathrm{AEM}}_{i,t}\}$
\end{algorithmic}
\end{algorithm}

\section{Limitations}\label{appendix:limitations}
In practice, \(H_{\mathrm{resp}}(s)\) is not directly computable for open-ended LLM policies, as it would require summing over the entire response space. We therefore approximate the relative response surprisal with a group-based, length-normalized entropy proxy. While our experiments provide statistical evidence that this proxy is aligned with the desired entropy dynamics and improves training, it is still a heuristic surrogate rather than an exact estimator. Consequently, AEM does not guarantee optimal entropy modulation, and its behavior may depend on the quality and diversity of the sampled rollout group. Designing more accurate estimators of response-level relative surprisal is a promising direction for future work.

\section{Broader Impact}\label{appendix:broader}

This work studies credit assignment in multi-turn agentic reinforcement learning and proposes AEM, a supervision-free, lightweight, and plug-in method for entropy-aware response-level credit modulation. By improving credit assignment under sparse outcome-only rewards with different backbones, AEM may help make LLM agents more effective in long-horizon interaction settings such as web navigation, embodied assistance, and software engineering. More broadly, methods that improve training efficiency without requiring additional dense supervision or auxiliary reward models may reduce engineering complexity and lower the cost of developing capable interactive agents. However, as with any advancement in agent capabilities, AEM may also increase the capability of LLM agents in high-impact domains, which could amplify risks if such agents are deployed without sufficient oversight. In particular, more efficient training of long-horizon interactive agents could facilitate misuse in settings such as autonomous web interaction, large-scale automation, or software manipulation.

Overall, we believe the impact of this work contributes a valuable tool to improve the reliability and sample efficiency of agentic RL research.

\newpage
\section{Experimental Training Curves}\label{appendix:curves}

\begin{figure}[h!]
    \centering
    \centering
    \begin{subfigure}[t]{0.32\linewidth}
        \centering
        \includegraphics[width=\linewidth]{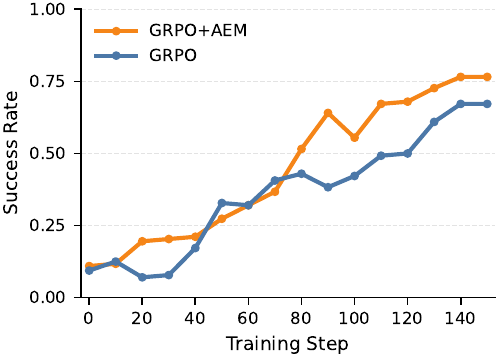}
    \end{subfigure}
    \hfill
    \begin{subfigure}[t]{0.32\linewidth}
        \centering
        \includegraphics[width=\linewidth]{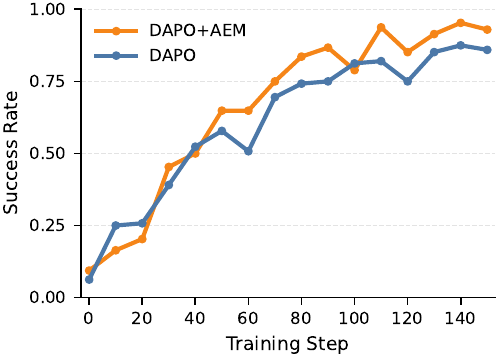}
    \end{subfigure}
    \hfill
    \begin{subfigure}[t]{0.32\linewidth}
        \centering
        \includegraphics[width=\linewidth]{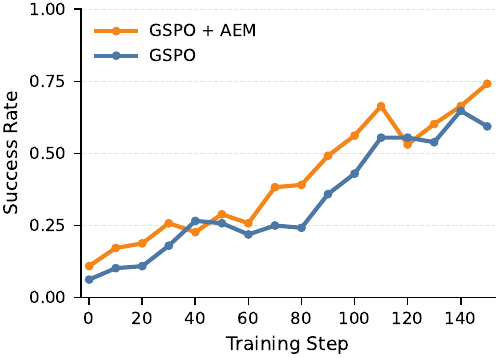}
    \end{subfigure}
    \caption{Training Curves of Qwen2.5-1.5B Model on ALFWorld.}
    \label{fig:append_1.5_alfworld}
\end{figure}

\begin{figure}[h!]
    \centering
    \centering
    \begin{subfigure}[t]{0.32\linewidth}
        \centering
        \includegraphics[width=\linewidth]{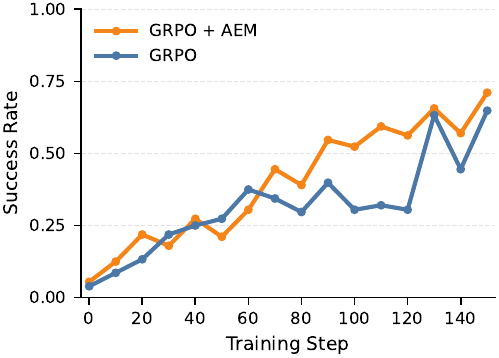}
    \end{subfigure}
    \hfill
    \begin{subfigure}[t]{0.32\linewidth}
        \centering
        \includegraphics[width=\linewidth]{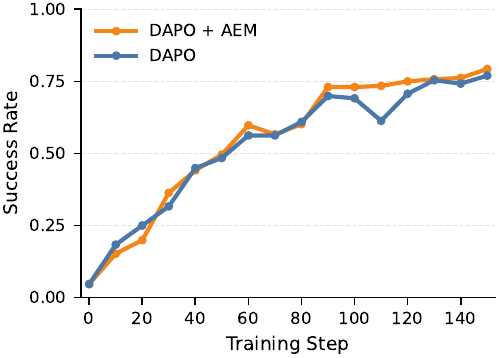}
    \end{subfigure}
    \hfill
    \begin{subfigure}[t]{0.32\linewidth}
        \centering
        \includegraphics[width=\linewidth]{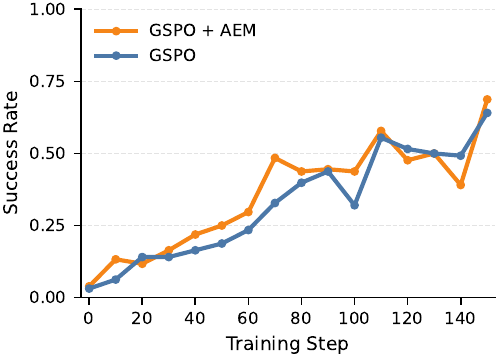}
    \end{subfigure}
    \caption{Training Curves of Qwen2.5-1.5B Model on WebShop.}
    \label{fig:append_1.5_webshop}
\end{figure}

\begin{figure}[h!]
    \centering
    \centering
    \begin{subfigure}[t]{0.32\linewidth}
        \centering
        \includegraphics[width=\linewidth]{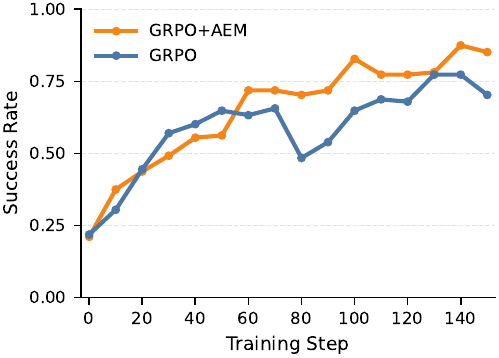}
    \end{subfigure}
    \hfill
    \begin{subfigure}[t]{0.32\linewidth}
        \centering
        \includegraphics[width=\linewidth]{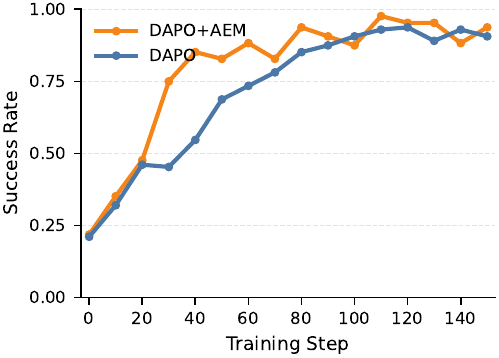}
    \end{subfigure}
    \hfill
    \begin{subfigure}[t]{0.32\linewidth}
        \centering
        \includegraphics[width=\linewidth]{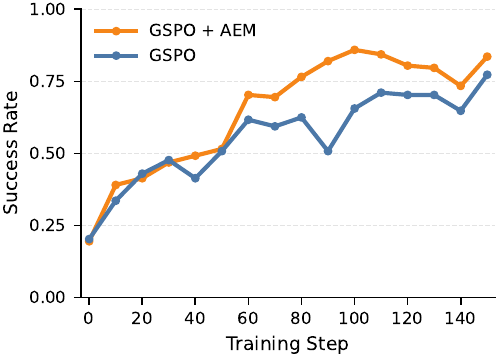}
    \end{subfigure}
    \caption{Training Curves of Qwen2.5-7B Model on ALFWorld.}
    \label{fig:append_7_alfworld}
\end{figure}

\begin{figure}[h!]
    \centering
    \centering
    \begin{subfigure}[t]{0.32\linewidth}
        \centering
        \includegraphics[width=\linewidth]{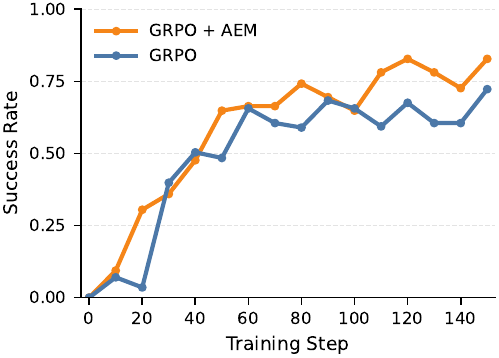}
    \end{subfigure}
    \hfill
    \begin{subfigure}[t]{0.32\linewidth}
        \centering
        \includegraphics[width=\linewidth]{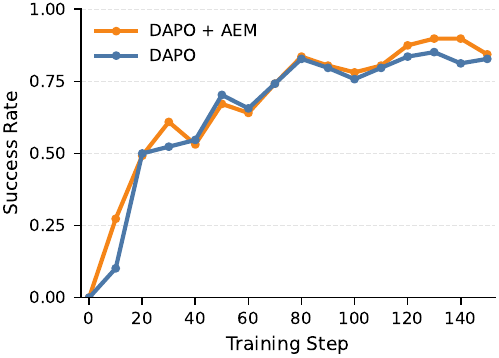}
    \end{subfigure}
    \hfill
    \begin{subfigure}[t]{0.32\linewidth}
        \centering
        \includegraphics[width=\linewidth]{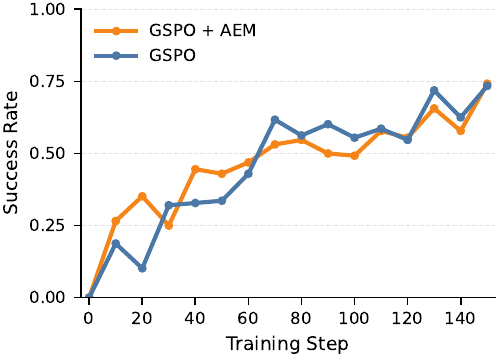}
    \end{subfigure}
    \caption{Training Curves of Qwen2.5-7B Model on WebShop.}
    \label{fig:append_7_webshop}
\end{figure}

\begin{figure}[h!]
    \centering
    \includegraphics[width=0.6\linewidth]{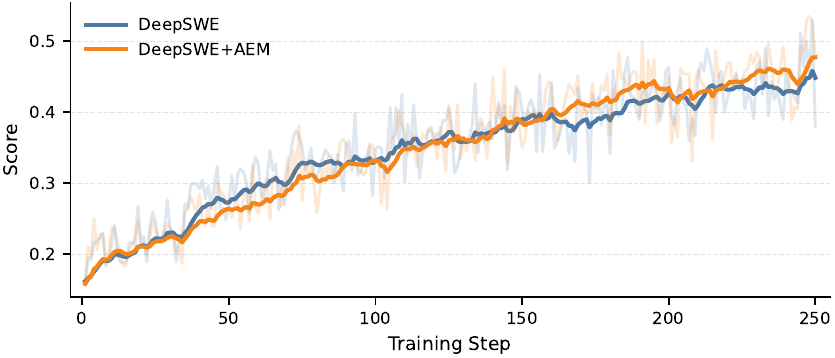}
    \caption{Training reward curves of DeepSWE with and without \name on the R2E dataset.}
    \label{fig:deepswe-curve}
\end{figure}

\newpage
\section{Ablation Study}\label{appendix:ablation}
To better understand the contribution of each component in AEM, we conduct controlled ablation studies on the WebShop benchmark using Qwen2.5-1.5B model. All variants use the same training configuration, rollout budget, and evaluation protocol as the main experiments; the only difference lies in how the modulation coefficient $\alpha$ is constructed or applied.

We compare the following variants:
\begin{enumerate}
\item\textbf{GRPO:} This is the base RL method without entropy-aware advantage modulation.

\item\textbf{+AEM:} This is exactly GRPO+AEM.

\item\textbf{+AEM$_{\text{shuffle}}$:} This variant first computes $\alpha$ in the same way as AEM, but then randomly permutes the coefficients within each group before applying them to response advantages. This preserves the marginal distribution and scale of $\alpha$, but destroys the alignment between each response and its own uncertainty estimate.

\item\textbf{+AEM$_{\text{reverse}}$:} This variant reverses the entropy-dependent modulation rule. Specifically, it changes the temperature $\lambda$ from $1$ to $-1$:
\begin{equation}
\alpha_{i,t}
=
\frac{
\exp(\tilde{\mathcal H}_{i,t})
}{
\frac{1}{|\mathcal G|}\sum_{(j,n)\in\mathcal G} \exp(\tilde{\mathcal H}_{j,n})+\varepsilon
},\quad  \text{for } (i,t) \in \mathcal G.
\end{equation}

\item\textbf{+AEM$_{\text{traj-norm}}$:} This variant sets trajectory normalization instead of group normalization. 
\item\textbf{+AEM$_{\text{batch-norm}}$:} This variant sets batch normalization instead of group normalization. 

\end{enumerate}

The ablation results are reported in Table~\ref{tab:webshop_qwen15b_horizontal}. Overall, full AEM achieves the best performance on both Score and Success Rate, confirming the effectiveness of entropy-aware credit modulation. In contrast, improper modulation strategies reduce, and in some cases even negate, these gains.

Specifically, +AEM$_{\text{shuffle}}$ is still comparable with the GRPO baseline, but remains clearly worse than +AEM. This suggests that the improvement does not come merely from introducing an additional fine-grained rescaling of response-level advantages. Instead, the key factor is whether the entropy signal is assigned to the corresponding response. Once this alignment is destroyed, even if the marginal distribution and scale of the modulation coefficients are preserved, the benefit is substantially weakened.
+AEM$_{\text{reverse}}$ performs substantially worse than GRPO, indicating that an incorrect entropy-to-credit mapping is actively harmful. Intuitively, this reversed mapping tends to exacerbate entropy collapse in the early stage of training, while suppressing beneficial exploitation later on. This result shows that what matters is not rescaling advantages, but applying the entropy-aware credit assignment in the correct direction.

The normalization results show that among the three choices, the group normalization is the most suitable for AEM. Compared with trajectory-level normalization +AEM$_{\text{traj-norm}}$, it benefits from stronger statistics by aggregating multiple responses. Compared with batch-level normalization +AEM$_{\text{batch-norm}}$, it avoids the potential entropy bias caused by mixing tasks, since all normalized responses come from the same prompt. This makes the entropy values more comparable and leads to more effective entropy-aware credit assignment.

\begin{table*}[t]
\centering
\small
\setlength{\tabcolsep}{6pt}
\begin{tabular}{lcccccc}
\toprule
Metric & GRPO & +AEM & +AEM$_{\text{reverse}}$ & +AEM$_{\text{shuffle}}$ & +AEM$_{\text{traj-norm}}$ & +AEM$_{\text{batch-norm}}$ \\
\midrule
Score 
& 83.6$_{\pm 0.2}$ 
& \textbf{86.4$_{\pm 2.1}$} 
& 77.2$_{\pm 3.3}$ 
& 85.6$_{\pm 1.1}$ 
& 83.8$_{\pm 3.1}$ 
& 83.1$_{\pm 4.8}$ \\
Succ. Rate 
& 65.0$_{\pm 0.6}$ 
& \textbf{70.6$_{\pm 2.4}$} 
& 64.5$_{\pm 1.7}$ 
& 64.8$_{\pm 2.4}$
& 68.7$_{\pm 1.5}$ 
& 66.1$_{\pm 2.4}$ \\
\bottomrule
\end{tabular}
\caption{Performance of ablation study on WebShop. Each entry reports the mean and sample standard deviation over 3 runs.}
\label{tab:webshop_qwen15b_horizontal}
\end{table*}

\newpage
\section{Theoretical Details and Proofs}\label{sec:proofs}
In this section, we rigorously provide mathematical details, prove the theorems and properties related to algorithms listed in the main text.

\subsection{Proof of Theorem~\ref{thm:responselevelentropy}}
\label{appendix:response}
\begin{proof}
For brevity, define
\begin{equation}
    Z(a_t,s_t)
    :=
    \sum_{\ell\ge 1}
    \mathcal H_\ell(a_t,s_t)\mathbf 1\{\ell\le |a_t|\}.
\end{equation}

\textbf{Step 1: We show that the response-level entropy is the conditional expectation of the pathwise token-entropy sum.}

By definition, the response-level entropy is
\begin{align}
\Hresp(s_t)
&=
-\sum_{a_t} \pi_\theta(a_t\mid s_t)\log \pi_\theta(a_t\mid s_t).
\end{align}
Since the policy is autoregressive, for any response
\(a=(y_1,\dots,y_{|a|})\),
\begin{align}
\log \pi_\theta(a_t\mid s_t)
&=
\sum_{\ell=1}^{|a_t|}
\log p_\theta(y_\ell\mid s_t,y_{<\ell}).
\end{align}
Therefore,
\begin{align}
\Hresp(s_t)
&=
-\sum_{a_t} \pi_\theta(a_t\mid s_t)\sum_{\ell=1}^{|a_t|}
\log p_\theta(y_\ell\mid s_t,y_{<\ell}) \notag\\
&=
-\sum_{a_t} \sum_{\ell\ge 1}
\pi_\theta(a_t\mid s_t)\,
\log p_\theta(Y_\ell\mid s_t,Y_{<\ell})\,
\mathbf 1\{\ell\le |a_t|\}
\notag\\
&=
\sum_{\ell\ge 1}
\mathbb E_{a_t\sim\pi_\theta(\cdot\mid s_t)}
\left[
-\log p_\theta(Y_\ell\mid s_t,Y_{<\ell})\,
\mathbf 1\{\ell\le |a_t|\}
\;\middle|\; s_t
\right].
\label{eq:proof_resp_step1}
\end{align}
Now apply the tower property. Since
\(\mathbf 1\{\ell\le |a_t|\}\) is measurable with respect to the prefix
\((s_t,Y_{<\ell})\),
\begin{align}
\Hresp(s_t)
&=
\sum_{\ell \ge 1}\mathbb E_{a_t \sim\pi_\theta(\cdot\mid s_t)}
\left[
-\log p_\theta(Y_\ell\mid s_t,Y_{<\ell})\,
\mathbf 1\{\ell\le |a_t|\}
\;\middle|\; s_t
\right] \notag\\
&=
\sum_{\ell \ge 1}\mathbb E_{a_t \sim\pi_\theta(\cdot\mid s_t)}
\left[
\mathbf 1\{\ell\le |a_t|\}\,
\mathbb E
\left[
-\log p_\theta(Y_\ell\mid s_t,Y_{<\ell})
\;\middle|\; s_t,Y_{<\ell}
\right]
\;\middle|\; s_t
\right]
\notag\\
&=
\sum_{\ell \ge 1}\mathbb E_{a_t\sim\pi_\theta(\cdot\mid s_t)}
\left[
\mathcal H_\ell(a_t,s_t)\,\mathbf 1\{\ell\le |a_t|\}
\;\middle|\; s_t
\right]
\notag\\
&=
\mathbb E_{a_t\sim\pi_\theta(\cdot\mid s_t)}
\left[
\sum_{\ell\ge 1}
\mathcal H_\ell(a_t,s_t)\,\mathbf 1\{\ell\le |a_t|\}
\;\middle|\; s_t
\right]
\notag\\
&=
\mathbb E_{a_t\sim\pi_\theta(\cdot\mid s_t)}
\left[
Z(a_t,s_t)\mid s_t
\right].
\label{eq:proof_resp_step1_final}
\end{align}

\textbf{Step 2: We show that the policy entropy is the expected sum of response-level entropies over on-policy visited states.}

Assume on-policy rollouts:
\[
s_0\sim\mathcal D,
\qquad
\tau\sim P_\theta(\cdot\mid s_0),
\]
so that, at each visited state \(s_t\),
\[
a_t\sim\pi_\theta(\cdot\mid s_t).
\]
By the definition of \(\Hpolicy\), using the tower property and Step 1:
\begin{align}
\Hpolicy
&=
\mathbb E_{s_0\sim\mathcal D,\tau\sim P_\theta(\cdot\mid s_0)}
\left[
\sum_{t=0}^{T-1}
Z(a_t,s_t)
\right].
\label{eq:proof_policy_1}\notag\\
&=
\mathbb E_{s_0\sim\mathcal D,\tau\sim P_\theta(\cdot\mid s_0)}
\left[
\sum_{t=0}^{T-1}
\mathbb E
\left[
Z(a_t,s_t)
\,\middle|\,
s_t
\right]
\right].
\notag\\
&=
\mathbb E_{s_0\sim\mathcal D,\tau\sim P_\theta(\cdot\mid s_0)}
\left[
\sum_{t=0}^{T-1}
\Hresp(s_t)
\right].
\end{align}
This proves that the policy entropy under on-policy rollouts is exactly the expected aggregation of response-level entropies over visited states.

Combining Step 1 and Step 2 completes the proof.
\end{proof}

\subsection{Policy Simplex}\label{appendix:simplex}
For a fixed state $s$, with finite action space $|\mathcal A_s| = m$, the policy $\pi = (\pi_\theta(a|s))_{a \in \mathcal A_s}$ is on the simplex
\begin{equation}
    \Delta^\circ(\mathcal A_s)
:=
\left\{
\pi\in\mathbb R^{m}:\sum_{a\in\mathcal A_s} \pi(a)=1, \quad \forall a \in \mathcal A_s,\pi(a)>0
\right\}
\end{equation}

equipped with Fisher-Rao metric to become a Riemannian manifold: for any $u, v \in T_\pi\Delta^\circ(\mathcal A_s)$
\begin{equation}
    g_\pi(u,v) := \sum_{a = 1}^m \frac{u_av_a}{\pi_a}.
\end{equation}
with tangent space
\begin{equation}
   T_\pi\Delta^\circ(\mathcal A_s)
=
\{x\in\mathbb R^{m}:\mathbf 1^\top x=0\}. 
\end{equation}
Fisher-Rao metric is the infinitesimal quadratic form induced by the KL divergence. For any tangent
perturbation \(\delta\in T_\pi\Delta^\circ(\mathcal A_s)\), i.e. \(\sum_a\delta_a=0\), we have
\[
D_{\mathrm{KL}}(\pi+\delta\,\|\,\pi)
=
\frac{1}{2}
\sum_{a\in\mathcal A_s}
\frac{\delta_a^2}{\pi_a}
+
o(\|\delta\|^2)
=
\frac{1}{2}g_\pi(\delta,\delta)
+
o(\|\delta\|^2).
\]
Thus, the Fisher-Rao metric measures the local size of a policy update in the same units as a local KL trust region.

\subsection{State and Proof of the Generalized Version of Theorem~\ref{thm:fisher}}\label{appendix:fisher}

\begin{theorem}[Regularized Response-level entropy drift. Proved in Appendix~\ref{appendix:fisher}]
\label{thm:fisher_real_world}
Let $\operatorname{grad}^F$ denote the natural gradient on the policy simplex, and consider the regularized local objective
\[
\ell_a(\pi)
=
A(a,s)\log \pi
+
\beta\,\psi(\Hresp(\pi))
-
\gamma\,D_{KL}(\pi\|\pi_{\mathrm{ref}}).
\]
Then the directional derivative of $\Hpolicy$ along the update direction $\operatorname{grad}^F \ell_a(\pi)$
\begin{align}
D_{\mathrm{RL}}(a;s)
&:=
\left\langle
\operatorname{grad}^F \Hpolicy(\pi),\,
\operatorname{grad}^F \ell_a(\pi)
\right\rangle_{\mathrm{Fisher\text{-}Rao}} \\
&= \sum_{t=0}^{T-1}\mathbb P_{s_0 \sim \mathcal D, \tau \sim P_\theta(\cdot\mid s_0)}[s_t = s]D_{RL}^{\mathrm{resp}}(a;s)\label{eq:policy_resp}
\end{align}
with $D_{RL}^{\mathrm{resp}}(a;s)$ defined by
\begin{align}
D_{\mathrm{RL}}^{\mathrm{resp}}(a;s)
&=
\underbrace{
A(a,s)\bigl(S(a\mid s)-\Hresp(\pi)\bigr)
}_{\textnormal{(I) reward-driven term}}
+
\underbrace{
(\beta\psi'(\Hresp(\pi))+\gamma)\operatorname{Var}_{a\sim\pi(\cdot|s)}\!\bigl(S(a\mid s)\bigr)
}_{\textnormal{(II) entropy-expanding term}}
\notag\\
&-
\underbrace{
\gamma\,\operatorname{Cov}_{a\sim\pi(\cdot|s)}\!\bigl(S(a\mid s),S_{\mathrm{ref}}(a\mid s)\bigr)
}_{\textnormal{(III) reference-alignment term}} .
\label{eq:thm:fisher3}
\end{align}
If we let $\beta = \gamma = 0$, i.e., only reward objective is considered, then we obtain the Theorem~\ref{thm:fisher}.
\end{theorem}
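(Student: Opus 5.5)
The plan is to work entirely on the Fisher--Rao simplex $\Delta^\circ(\mathcal A_s)$ of Appendix~\ref{appendix:simplex}. I would reduce every Fisher--Rao inner product of two natural gradients to a covariance under $\pi(\cdot\mid s)$, and then compute three covariances, one per term of the objective.

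\textbf{Step 1 (covariance identity).} For a smooth $f$ on $\Delta^\circ(\mathcal A_s)$, extend it to the positive orthant and write $\partial_b f$ for its Euclidean partial derivatives. The Riesz condition $g_\pi(\operatorname{grad}^F f, v)=\sum_b \partial_b f\, v_b$ for all $v$ with $\mathbf 1^\top v=0$ forces
\[
(\operatorname{grad}^F f)_b=\pi_b\bigl(\partial_b f-\mathbb E_{\pi}[\partial f]\bigr).
\]
This is the unique tangent solution, and it does not depend on the extension because the centering removes the ambiguity along $\mathbf 1$. Substituting into $g_\pi$ gives
\[
\langle \operatorname{grad}^F f,\operatorname{grad}^F h\rangle_{\mathrm{Fisher\text{-}Rao}}=\operatorname{Cov}_{b\sim\pi}\bigl(\partial_b f,\partial_b h\bigr).
\]

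\textbf{Step 2 (the three response-level terms).} The partial derivatives are as follows:
\begin{itemize}
\item $\partial_b \Hresp = -\log\pi_b-1 = S(b\mid s)-1$;
\item $\partial_b \log\pi_a=\delta_{ab}/\pi_a$;
\item $\partial_b\psi(\Hresp)=\psi'(\Hresp)\,(S(b\mid s)-1)$;
\item $\partial_b D_{KL}(\pi\|\pi_{\mathrm{ref}})=\log\pi_b+1-\log\pi_{\mathrm{ref},b}=-S(b\mid s)+S_{\mathrm{ref}}(b\mid s)+1$.
\end{itemize}
Constants drop out of covariances, so each term follows directly.
\begin{itemize}
\item Reward term: $\operatorname{Cov}_\pi\bigl(S,\delta_{a\cdot}/\pi_a\bigr)=\sum_b\pi_b\bigl(S(b)-\Hresp\bigr)\delta_{ab}/\pi_a=S(a\mid s)-\Hresp$. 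Multiplying by $A(a,s)$ gives (I), which also yields \eqref{eq:thm:fisher1}.
\item Entropy-regularization term: $\beta\psi'(\Hresp)\operatorname{Var}_\pi(S)$.
\item KL term: $-\gamma\operatorname{Cov}_\pi(S,-S+S_{\mathrm{ref}})=\gamma\operatorname{Var}_\pi(S)-\gamma\operatorname{Cov}_\pi(S,S_{\mathrm{ref}})$.
\end{itemize}
Adding these by bilinearity gives \eqref{eq:thm:fisher3}. Setting $\beta=\gamma=0$ recovers the unregularized case.

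\textbf{Step 3 (lifting to $\Hpolicy$).} By Theorem~\ref{thm:responselevelentropy}, Eq.~\eqref{eq:policy}, freezing the rollout distribution makes $\Hpolicy$ a linear combination of per-state response entropies:
\[
\Hpolicy=\sum_{s'}\rho(s')\,\Hresp(s'),\qquad \rho(s')=\sum_{t=0}^{T-1}\mathbb P_{s_0 \sim \mathcal D, \tau \sim P_\theta}[s_t=s'].
\]
Here $\rho$ is treated as a constant, by the stop-gradient assumption. I would model the full policy as a point of the product manifold $\prod_{s'}\Delta^\circ(\mathcal A_{s'})$ with the product Fisher--Rao metric. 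The natural gradient then splits blockwise, and its $s'$-block is $\rho(s')\operatorname{grad}^F\Hresp(s')$. The local objective $\ell_a$ depends only on $\pi(\cdot\mid s)$, so $\operatorname{grad}^F\ell_a$ is supported on the $s$-block. The inner product therefore collapses to $\rho(s)\,D^{\mathrm{resp}}_{\mathrm{RL}}(a;s)$, which is \eqref{eq:policy_resp}.

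\textbf{Main obstacle.} The main obstacle is making Step 3 precise. One must fix which metric is used on the joint policy space (unweighted product versus occupancy-weighted), since a weighted metric would rescale the blocks and change the factor $\rho(s)$. One must also state exactly where the frozen-occupancy assumption removes the terms coming from differentiating $P_\theta$. I would adopt the unweighted product metric explicitly and note that, under any positive per-state weighting, only the positive prefactor changes. The sign conclusions \eqref{eq:dynamics} are therefore unaffected.
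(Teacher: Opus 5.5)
Your proposal is correct and follows essentially the same route as the paper. You obtain the Fisher--Rao gradient $\pi\odot\bigl(\nabla_\pi f-(\pi^\top\nabla_\pi f)\mathbf 1\bigr)$ from the Riesz condition, take term-by-term inner products to get (I)--(III) (the KL term splitting into $\gamma\operatorname{Var}_\pi(S)-\gamma\operatorname{Cov}_\pi(S,S_{\mathrm{ref}})$), and use linearity under frozen occupancy to lift from $\Hresp$ to $\Hpolicy$; your covariance identity and explicit product metric are only tidier packagings of the paper's Steps 2--4 and Step 1.
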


\begin{remark}
The decomposition in~\eqref{eq:thm:fisher3} yields four immediate observations.
\begin{itemize}
\item 
The entropy and KL regularization terms are \textbf{state-level} modulation terms: unlike the reward-driven term (I), they do not depend on the sampled action signal \(A(a,s)\).
\item
Term (I) shows that the advantage and relative surprisal of sampled action can jointly determine entropy dynamics without entropy and KL regularization. 

\item
The entropy regularizer contributes a positive force through
\(
\beta\psi'(\Hresp)\operatorname{Var}_{a\sim\pi(\cdot|s)}\!\bigl(S(a\mid s)\bigr),
\)
which is consistent with its intended role.
\item The KL term contributes two parts: a positive variance term
\(
\gamma\,\operatorname{Var}_{a\sim\pi(\cdot|s)}\!\bigl(S(a\mid s)\bigr),
\)
and a covariance term
\(
-\gamma\,\operatorname{Cov}_{a\sim\pi(\cdot|s)}\!\bigl(S(a\mid s),S_{\mathrm{ref}}(a\mid s)\bigr),
\)
whose sign is generally not fixed. Fig~\ref{fig:simplex-entropy-drift} demonstrates the entropy dynamics along updating on three-action simplex.
\end{itemize}
\end{remark}

\begin{proof}
Fix a state \(s\). For brevity, write
\[
\pi_b:=\pi(b\mid s),\qquad
\rho_b:=\pi_{\mathrm{ref}}(b\mid s),\qquad
A_a:=A(a,s),
\]
and assume \(\pi_b>0\) and \(\rho_b>0\) for all \(b\in\mathcal A_s\). Define
\[
S_b:=-\log \pi_b,\qquad
S_b^{\mathrm{ref}}:=-\log \rho_b,\qquad
H:=\Hresp(\pi)=\sum_{b\in\mathcal A_s}\pi_b S_b .
\]
We also note
\[
\operatorname{Var}_{a\sim\pi(\cdot|s)}(S)
:=
\sum_{b\in\mathcal A_s}\pi_b(S_b-H)^2,
\]
and
\[
\operatorname{Cov}_{a\sim\pi(\cdot|s)}(S,S_{\mathrm{ref}})
:=
\sum_{b\in\mathcal A_s}
\pi_b(S_b-H)
\bigl(S_b^{\mathrm{ref}}-\mathbb E_\pi[S_{\mathrm{ref}}]\bigr)
=
\sum_{b\in\mathcal A_s}
\pi_b(S_b-H)S_b^{\mathrm{ref}},
\]
where the last equality follows from
\[
\sum_{b\in\mathcal A_s}\pi_b(S_b-H)=0.
\]
\textbf{Step 1.} We first show Eq.~\eqref{eq:policy_resp}:
By Eq.~\eqref{eq:policy} and definition, with the assumption that gradients are not propagated through the rollout distribution \(P_\theta(\tau\mid s_0)\)., since for $s_t \neq s$, $\Hresp(s_t)$ is a constant on the $\Delta^\circ(\mathcal A_s)$, we then deduce:
\begin{align}
    D_{RL}(a;s) &= \left<\operatorname{grad}^F \Hpolicy(\pi),\operatorname{grad}^F \ell_a(\pi)\right>\notag\\
    & = \left<\operatorname{grad}^F \mathbb E_{s_0 \sim \mathcal D, \tau \sim P_\theta}\left[\sum_{t=0}^{T-1}\Hresp(\pi),\operatorname{grad}^F \ell_a(\pi)\right]\right>\notag\\
    &=\left<\operatorname{grad}^F \mathbb E_{s_0 \sim \mathcal D, \tau \sim P_\theta}\left[\sum_{t=0}^{T-1}\mathrm 1(s_t = s)\Hresp(\pi)\right],\operatorname{grad}^F
    \ell_a(\pi)\right>\notag\\
    &=\sum_{t=0}^{T-1}\mathbb E_{s_0 \sim \mathcal D, \tau \sim P_\theta}\left[\mathrm 1(s_t = s)\right]\left<\operatorname{grad}^F \Hresp(\pi),\operatorname{grad}^F
    \ell_a(\pi)\right>\notag\\
    & = \sum_{t=0}^{T-1} \mathbb P_{s_0 \sim \mathcal D, \tau \sim P_\theta}[s_t = s]D_{RL}^{\mathrm{resp}}(a;s),
\end{align}
which is exactly Eq.~\eqref{eq:policy_resp}.

\textbf{Step 2.} For any smooth function
\(f:\Delta^\circ(\mathcal A_s)\to\mathbb R\), its Fisher-Rao gradient is
\begin{equation}
\label{eq:fr_gradient_general}
\operatorname{grad}^F f(\pi)
=
\pi\odot
\Bigl(
\nabla_\pi f
-
(\pi^\top\nabla_\pi f)\mathbf 1
\Bigr).
\end{equation}
where $\odot$ denotes the Hadamard product, and $\mathbf 1$ is the vector with all components to be one. Indeed, for any
\(\xi\in T_\pi\Delta^\circ(\mathcal A_s)\), since
\(\mathbf 1^\top\xi=0\),
\begin{align}
g_\pi\!\left(
\pi\odot
\Bigl(
\nabla_\pi f
-
(\pi^\top\nabla_\pi f)\mathbf 1
\Bigr),
\xi
\right)
&=
\sum_{b\in\mathcal A_s}
\frac{
\pi_b
\bigl(
\partial_{\pi_b}f-\pi^\top\nabla_\pi f
\bigr)
\xi_b
}{\pi_b}
\notag\\
&=
\sum_{b\in\mathcal A_s}
\partial_{\pi_b}f\,\xi_b
-
(\pi^\top\nabla_\pi f)
\sum_{b\in\mathcal A_s}\xi_b
\notag\\
&=
\nabla_\pi f^\top \xi
=
df_\pi[\xi].
\end{align}
Thus~\eqref{eq:fr_gradient_general} is the Riemannian gradient under the Fisher-Rao metric.

\textbf{Step 3.} We compute the Fisher-Rao gradients of all terms in
\[
\ell_a(\pi)
=
A_a\log \pi_a
+
\beta\,\psi(\Hresp(\pi))
-
\gamma\,D_{\mathrm{KL}}(\pi\|\pi_{\mathrm{ref}}).
\]

First, for the reward-driven term
\[
\ell_a^{A}(\pi):=A_a\log \pi_a,
\]
we have
\begin{align}
\operatorname{grad}^F\ell_a^A(\pi)
&=\pi\odot \left(\nabla_\pi\ell_a^A(\pi) - (\pi^\top \nabla_\pi\ell_a^A(\pi))\mathbf 1\right)\notag\\
&=\pi\odot
\left(
\frac{A_a}{\pi_a}e_a
-
A_a\mathbf 1
\right)
\notag\\
&=
A_a(e_a-\pi).
\label{eq:grad_reward_term}
\end{align}

Next, for the response-level entropy, we have
\[
\partial_{\pi_b}\Hresp(\pi)
=
-(1+\log\pi_b)
=
S_b-1,\quad
\pi^\top\nabla_\pi\Hresp(\pi)
=
\sum_{b\in\mathcal A_s}\pi_b(S_b-1)
=
H-1.
\]
Hence,
\begin{align}
\operatorname{grad}^F\Hresp(\pi)
&=
\pi\odot
\Bigl(
(S_b-1)_{b\in\mathcal A_s}
-
(H-1)\mathbf 1
\Bigr)
\notag\\
&=
\pi\odot
\Bigl(
(S_b)_{b\in\mathcal A_s}
-
H\mathbf 1
\Bigr).
\label{eq:grad_entropy}
\end{align}

For the entropy regularizer
\[
\ell^{E}(\pi):=\beta\,\psi(\Hresp(\pi)),
\]
the chain rule gives
\[
\nabla_\pi \ell^{E}(\pi)
=
\beta\,\psi'(H)\nabla_\pi\Hresp(\pi).
\]
Since the Fisher-Rao projection is linear, we have
\begin{align}
\operatorname{grad}^F\ell^{E}(\pi)
&=
\beta\,\psi'(H)\operatorname{grad}^F\Hresp(\pi)
\notag\\
&=
\beta\,\psi'(H)\,
\pi\odot
\Bigl(
(S_b)_{b\in\mathcal A_s}
-
H\mathbf 1
\Bigr).
\label{eq:grad_entropy_reg}
\end{align}

Finally, consider the KL divergence term
\[
K(\pi)
:=
D_{\mathrm{KL}}(\pi\|\pi_{\mathrm{ref}})
=
\sum_{b\in\mathcal A_s}
\pi_b\log\frac{\pi_b}{\rho_b}.
\]
we have
\begin{align}
\operatorname{grad}^FK(\pi)
&= \pi \odot 
\left(\nabla_\pi K(\pi) - (\pi^\top\nabla_\pi K(\pi))\mathbf 1\right)\notag\\
&=\pi\odot\left(
\left(\log\frac{\pi_b}{\rho_b}+1\right)_{b\in\mathcal A_s}
-
(K(\pi)+1)\mathbf 1
\right)
\notag\\
&=
\pi\odot
\left(
\left(\log\frac{\pi_b}{\rho_b}\right)_{b\in\mathcal A_s}
-
K(\pi)\mathbf 1
\right).
\label{eq:grad_kl}
\end{align}
Combining~\eqref{eq:grad_reward_term}, \eqref{eq:grad_entropy_reg}, and~\eqref{eq:grad_kl}, we obtain
\begin{align}
\operatorname{grad}^F\ell_a(\pi)
&=
A_a(e_a-\pi)
+
\beta\,\psi'(H)\,
\pi\odot
\Bigl(
(S_b)_{b\in\mathcal A_s}
-
H\mathbf 1
\Bigr)
\notag\\
&\quad
-\gamma\,
\pi\odot
\left(
\left(\log\frac{\pi_b}{\rho_b}\right)_{b\in\mathcal A_s}
-
K(\pi)\mathbf 1
\right).
\label{eq:grad_full_objective}
\end{align}

\textbf{Step 4.} We compute
\[
D_{\mathrm{RL}}^{\mathrm{resp}}(a,s)
=
g_\pi\!\left(
\operatorname{grad}^F\Hresp(\pi),
\operatorname{grad}^F\ell_a(\pi)
\right).
\]
By~\eqref{eq:grad_entropy} and~\eqref{eq:grad_full_objective},
\begin{align}
D_{\mathrm{RL}}^{\mathrm{resp}}(a,s)
&=
g_\pi
\left(
\pi\odot(S-H\mathbf 1),
A_a(e_a-\pi)
\right)
\notag\\
&\quad
+
\beta\,\psi'(H)\,
g_\pi
\left(
\pi\odot(S-H\mathbf 1),
\pi\odot(S-H\mathbf 1)
\right)
\notag\\
&\quad
-
\gamma\,
g_\pi
\left(
\pi\odot(S-H\mathbf 1),
\pi\odot
\left(
\log\frac{\pi}{\rho}
-
K(\pi)\mathbf 1
\right)
\right).
\label{eq:drift_decomposition}
\end{align}

For the first term,
\begin{align}
g_\pi
\left(
\pi\odot(S-H\mathbf 1),
A_a(e_a-\pi)
\right)
&=
A_a
\sum_{b\in\mathcal A_s}
\frac{
\pi_b(S_b-H)
\bigl((e_a)_b-\pi_b\bigr)
}{\pi_b}
\notag\\
&=
A_a
\left[
S_a-H
-
\sum_{b\in\mathcal A_s}
\pi_b(S_b-H)
\right]
\notag\\
&=
A_a(S_a-H).
\label{eq:reward_drift_term}
\end{align}

For the second term,
\begin{align}
g_\pi
\left(
\pi\odot(S-H\mathbf 1),
\pi\odot(S-H\mathbf 1)
\right)
&=
\sum_{b\in\mathcal A_s}
\frac{
\pi_b^2(S_b-H)^2
}{\pi_b}
\notag\\
&=
\sum_{b\in\mathcal A_s}
\pi_b(S_b-H)^2
\notag\\
&=
\operatorname{Var}_{a\sim\pi(\cdot|s)}(S).
\label{eq:entropy_reg_drift_term}
\end{align}

For the third term, since
\[
\sum_{b\in\mathcal A_s}\pi_b(S_b-H)=0,
\]
we have
\begin{align}
&g_\pi
\left(
\pi\odot(S-H\mathbf 1),
\pi\odot
\left(
\log\frac{\pi}{\rho}
-
K(\pi)\mathbf 1
\right)
\right)
\notag\\
&\qquad
=
\sum_{b\in\mathcal A_s}
\pi_b(S_b-H)
\left(
\log\frac{\pi_b}{\rho_b}
-
K(\pi)
\right)
\notag\\
&\qquad
=
\sum_{b\in\mathcal A_s}
\pi_b(S_b-H)
\log\frac{\pi_b}{\rho_b}.
\label{eq:kl_inner_pre}
\end{align}
Using
\[
\log\frac{\pi_b}{\rho_b}
=
\log\pi_b-\log\rho_b
=
-S_b+S_b^{\mathrm{ref}},
\]
we get
\begin{align}
\sum_{b\in\mathcal A_s}
\pi_b(S_b-H)
\log\frac{\pi_b}{\rho_b}
&=
\sum_{b\in\mathcal A_s}
\pi_b(S_b-H)
\bigl(-S_b+S_b^{\mathrm{ref}}\bigr)
\notag\\
&=
-
\sum_{b\in\mathcal A_s}
\pi_b(S_b-H)S_b
+
\sum_{b\in\mathcal A_s}
\pi_b(S_b-H)S_b^{\mathrm{ref}}
\notag\\
&=
-\operatorname{Var}_{a\sim\pi(\cdot|s)}(S)
+
\operatorname{Cov}_{a\sim\pi(\cdot|s)}(S,S_{\mathrm{ref}}).
\label{eq:kl_inner_final}
\end{align}

Substituting~\eqref{eq:reward_drift_term}, \eqref{eq:entropy_reg_drift_term}, and~\eqref{eq:kl_inner_final} into~\eqref{eq:drift_decomposition}, we obtain
\begin{align}
D_{\mathrm{RL}}^{\mathrm{resp}}(a,s)
&=
A_a(S_a-H)
+
\beta\,\psi'(H)\operatorname{Var}_{a\sim\pi(\cdot|s)}(S)
+
\gamma\operatorname{Var}_{a\sim\pi(\cdot|s)}(S)
-
\gamma\operatorname{Cov}_{a\sim\pi(\cdot|s)}(S,S_{\mathrm{ref}})
\notag\\
&=
A(a,s)
\bigl(S(a\mid s)-\Hresp(\pi)\bigr)
+
\bigl(\beta\psi'(\Hresp(\pi))+\gamma\bigr)
\operatorname{Var}_{a\sim\pi(\cdot|s)}\!\bigl(S(a\mid s)\bigr)
\notag\\
&\quad
-
\gamma
\operatorname{Cov}_{a\sim\pi(\cdot|s)}\!\bigl(S(a\mid s),S_{\mathrm{ref}}(a\mid s)\bigr).
\end{align}
This proves the theorem.
\end{proof}

\subsection{Doob's decomposition of fixed-length response surprisal}
\label{appendix:doob}

\begin{proposition}[Doob's decomposition of response surprisal]
\label{prop:doob_surprisal_fixed_length}
Fix a state \(s\), and let
\(
a=(Y_1,\ldots,Y_L)\sim \pi_\theta(\cdot\mid s)
\)
be a realized response sampled from the policy, where \(L=|a|\) is its realized length.
Define the realized token surprisal
\[
X_\ell
:=
-\log p_\theta(Y_\ell\mid s,Y_{<\ell}),
\]
then the response surprisal admits the decomposition
\[
S(a\mid s)
=
\sum_{\ell=1}^{L}X_\ell
=
\sum_{\ell=1}^{L}\mathcal H_\ell(a,s)
+
M_L,
\]
where
\(
M_k:=\sum_{\ell=1}^{k}\left(X_\ell-\mathcal H_\ell(a,s)\right)
\)
is a zero-mean martingale with respect to \((\mathcal F_k)_{k=0}^{L}\). 
Consequently,
\begin{equation}
    S(a\mid s)-\Hresp(s)
=
\left(
\sum_{\ell=1}^{L}\mathcal H_\ell(a,s)-\Hresp(s)
\right)
+
M_L.
\end{equation}
\end{proposition}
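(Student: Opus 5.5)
The plan is to set up the natural filtration of the autoregressive generation and then apply the standard Doob decomposition to the partial sums of token surprisals. Let $\mathcal F_0:=\sigma(s)$ and $\mathcal F_k:=\sigma(s,Y_1,\ldots,Y_k)$ for $k\ge 1$. Since $\mathcal H_\ell(a,s)$ depends only on the prefix $(s,Y_{<\ell})$, it is $\mathcal F_{\ell-1}$-measurable, i.e.\ predictable. The realized surprisal $X_\ell$ is $\mathcal F_\ell$-measurable. By the same conditioning computation as in Step 1 of the proof of Theorem~\ref{thm:responselevelentropy}, it satisfies
\[
\E[X_\ell\mid\mathcal F_{\ell-1}]=-\sum_{y\in\mathcal V}p_\theta(y\mid s,Y_{<\ell})\log p_\theta(y\mid s,Y_{<\ell})=\mathcal H_\ell(a,s).
\]

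Next I would define the increments $D_\ell:=X_\ell-\mathcal H_\ell(a,s)$. Each $D_\ell$ is $\mathcal F_\ell$-measurable and satisfies $\E[D_\ell\mid\mathcal F_{\ell-1}]=0$. Hence $M_k=\sum_{\ell\le k}D_\ell$ is adapted and has the martingale property $\E[M_k\mid\mathcal F_{k-1}]=M_{k-1}$. We also have $M_0=0$, so $\E M_k=0$. Integrability holds because the vocabulary is finite and all realized token probabilities are positive, so each $X_\ell$ is finite almost surely. For conditional integrability, $\E[X_\ell\mid\mathcal F_{\ell-1}]=\mathcal H_\ell\le\log|\mathcal V|$.

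The identity $S(a\mid s)=\sum_{\ell\le L}X_\ell$ is the autoregressive factorization already used in the definition of $S$. Adding and subtracting $\sum_{\ell\le L}\mathcal H_\ell$ gives the decomposition. Subtracting $\Hresp(s)$ from both sides then yields the final display, and this requires no further argument.

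The main obstacle is that $L=|a|$ is random, so $M_L$ is a martingale evaluated at a stopping time rather than at a fixed index. Because the proposition is labelled ``fixed-length'', I would first treat $L$ as deterministic (or condition on it). In that case the statement ``$M_L$ is a zero-mean martingale on $(\mathcal F_k)_{k=0}^L$'' is exactly the result above.

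For random length I would instead use the stopped process $M_{k\wedge L}$, whose increments are $D_\ell\mathbf 1\{\ell\le L\}$. The event $\{\ell\le L\}$ is $\mathcal F_{\ell-1}$-measurable, as already noted in the proof of Theorem~\ref{thm:responselevelentropy}. Since $L\le n$ is bounded, optional stopping gives $\E M_L=0$. This is consistent with $\E[\sum_\ell\mathcal H_\ell\mathbf 1\{\ell\le L\}]=\Hresp(s)$ from Theorem~\ref{thm:responselevelentropy}.
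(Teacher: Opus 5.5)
Your proposal is correct and follows the same route as the paper's proof. You identify $\E[X_\ell\mid\mathcal F_{\ell-1}]=\mathcal H_\ell(a,s)$ as the predictable part, conclude that the centered increments $X_\ell-\mathcal H_\ell(a,s)$ make $M_k$ a zero-mean martingale, then add and subtract $\sum_{\ell\le L}\mathcal H_\ell(a,s)$ and subtract $\Hresp(s)$, while your extra remarks on integrability (via $\mathcal H_\ell\le\log|\mathcal V|$) and on the random length $L$ (via the stopped process $M_{k\wedge L}$ and optional stopping with $L\le n$) fill in points the paper's proof leaves implicit.
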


\begin{proof}
For each \(\ell\), by definition,
\begin{align}
\mathbb E[X_\ell\mid\mathcal F_{\ell-1}]
&=
\mathbb E\!\left[
-\log p_\theta(Y_\ell\mid s,Y_{<\ell})
\,\middle|\,
s,Y_{<\ell}
\right]
\notag\\
&=
-\sum_{y\in\mathcal V}
p_\theta(y\mid s,Y_{<\ell})
\log p_\theta(y\mid s,Y_{<\ell})
\notag\\
&=
\mathcal H_\ell(a,s).
\label{eq:fixed_length_predictable_part}
\end{align}
Thus \(\mathcal H_\ell(a,s)\) is \(\mathcal F_{\ell-1}\)-measurable and hence predictable:

\begin{align}
\mathbb E[X_\ell-\mathcal H_\ell(a,s)\mid\mathcal F_{\ell-1}]
&=
\mathbb E[X_\ell-\mathcal H_\ell(a,s)\mid\mathcal F_{\ell-1}]
\notag\\
&=
\mathbb E[X_\ell\mid\mathcal F_{\ell-1}]
-
\mathcal H_\ell(a,s)
\notag\\
&=
0.
\label{eq:fixed_length_mds}
\end{align}
Therefore,
\(
M_k:=\sum_{\ell=1}^{k}X_\ell-\mathcal H_\ell(a,s)
\)
is a martingale.

With the definition of \(X_\ell-\mathcal H_\ell(a,s)\), we obtain
\begin{align}
S(a\mid s)
&=
\sum_{\ell=1}^{L}X_\ell
\notag\\
&=
\sum_{\ell=1}^{L}
\left(
\mathcal H_\ell(a,s)+X_\ell-\mathcal H_\ell(a,s)
\right)
\notag\\
&=
\sum_{\ell=1}^{L}\mathcal H_\ell(a,s)+M_L.
\end{align}
Finally, by the definition of response-level entropy over fixed-length responses,
subtracting \(\Hresp(s)\) from both sides of the Doob's decomposition gives
\[
S(a\mid s)-\Hresp(s)
=
\left(
\sum_{\ell=1}^{L}\mathcal H_\ell(a,s)-\Hresp(s)
\right)
+
M_L.
\]
This completes the proof.
\end{proof}

\subsection{Parametrized Version of Entropy Drift}
\label{appendix:parametrized_version}

In this section, we analyze how the parametrized response entropy varies along the sample-induced update direction in parameter space. The resulting entropy-drift formula is analogous in spirit to the main result in Theorem~\ref{thm:fisher}. However, once the policy is parameterized by $\theta$, the drift additionally involves a kernel-weighted baseline term $B_{\mathrm{ker}}$(a;s).

\begin{theorem}[Parametrized regularized response-level entropy drift]
\label{thm:param_reg_entropy_drift}
Fix a state \(s\). Let
\[
\pi_b:=\pi_\theta(b\mid s),\qquad
\rho_b:=\pi_{\mathrm{ref}}(b\mid s),\qquad
G_b:=\nabla_\theta\log\pi_\theta(b\mid s),
\]
and
\[
S_b:=-\log \pi_b,\qquad
S_b^{\mathrm{ref}}:=-\log \rho_b,\qquad
H:=\Hresp(s)=\sum_{b\in\mathcal A_s}\pi_b S_b.
\]
Define the policy-gradient kernel
\(
K(b,c;s):=\langle G_b,G_c\rangle.
\)
Then the Euclidean parameter-space entropy drift satisfies 
\begin{align}
D_{\mathrm{RL}}^\theta(a;s)
&=
-A(a,s)
\left[
\pi_\theta(a\mid s)(H-S_a)K(a,a;s)
+
B_{\mathrm{ker}}(a;s)
\right]
\notag\\
&\quad
+
\bigl(\beta\psi'(H)+\gamma\bigr)
\mathcal V_\theta(S;s)
-
\gamma\,\mathcal C_\theta(S,S_{\mathrm{ref}};s),
\end{align}
where \(B_{\mathrm{ker}}(a;s)\) is the cross-response residual introduced by shared parameterization:
\begin{align}
\mathcal V_\theta(S;s)
&:=
\E_{b,c\sim\pi_\theta(\cdot\mid s)}
\left[
(S_b-H)(S_c-H)K(b,c;s)
\right]
=
\|\nabla_\theta\Hresp(s)\|_2^2,
\\
\mathcal C_\theta(S,S_{\mathrm{ref}};s)
&:=
\E_{b,c\sim\pi_\theta(\cdot\mid s)}
\left[
(S_b-H)
\bigl(S_c^{\mathrm{ref}}-\E_{\pi_\theta}[S_{\mathrm{ref}}]\bigr)
K(b,c;s)
\right]\\
B_{\mathrm{ker}}(a;s)
&:=
\sum_{b\neq a}
\pi_\theta(b\mid s)(H-S_b)K(b,a;s).
\end{align}
\end{theorem}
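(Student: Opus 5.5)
The plan is a direct computation in parameter space, mirroring Steps 2--4 of the proof of Theorem~\ref{thm:fisher_real_world}. The Fisher--Rao inner product on $\Delta^\circ(\mathcal A_s)$ is replaced by the Euclidean inner product of $\theta$-gradients, so $D_{\mathrm{RL}}^\theta(a;s):=\langle \nabla_\theta\Hresp(s),\nabla_\theta\ell_a\rangle$. Everything reduces to writing each gradient as a $\pi$-weighted combination of the score vectors $G_b$ and pairing them through the kernel $K(b,c;s)=\langle G_b,G_c\rangle$.

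The one identity I use repeatedly is the score identity $\sum_b \pi_b G_b=\sum_b\nabla_\theta\pi_b=0$. It lets me shift any coefficient multiplying $\pi_b G_b$ by a $b$-independent constant.

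\textbf{Step 1: gradient of the entropy.} Differentiating $H=-\sum_b\pi_b\log\pi_b$ with $\nabla_\theta\pi_b=\pi_bG_b$ gives $\nabla_\theta H=\sum_b\pi_b(S_b-1)G_b$. By the score identity this equals $\sum_b\pi_b(S_b-H)G_b$, the parametric analogue of \eqref{eq:grad_entropy}. Consequently
\[
\|\nabla_\theta H\|_2^2=\sum_{b,c}\pi_b\pi_c(S_b-H)(S_c-H)K(b,c;s)=\mathcal V_\theta(S;s),
\]
which gives the identity claimed in the definition of $\mathcal V_\theta$.

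\textbf{Step 2: gradient of the objective, term by term.}
\begin{itemize}
\item The reward term contributes $A(a,s)G_a$.
\item The entropy regularizer contributes $\beta\psi'(H)\nabla_\theta H$ by the chain rule.
\item For the KL term, $\nabla_\theta D_{\mathrm{KL}}(\pi_\theta\|\pi_{\mathrm{ref}})=\sum_c\pi_cG_c(\log\frac{\pi_c}{\rho_c}+1)$. Using $\log\frac{\pi_c}{\rho_c}=-S_c+S_c^{\mathrm{ref}}$ and recentring both pieces with the score identity, this becomes
\[
\sum_c\pi_cG_c\bigl[-(S_c-H)+(S^{\mathrm{ref}}_c-\E_{\pi_\theta}[S_{\mathrm{ref}}])\bigr].
\]
\end{itemize}

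\textbf{Step 3: pair with $\nabla_\theta H$.}
\begin{itemize}
\item Reward term: $A(a,s)\sum_b\pi_b(S_b-H)K(b,a;s)$. Splitting off $b=a$ and writing $S_b-H=-(H-S_b)$ gives exactly $-A(a,s)\bigl[\pi_\theta(a\mid s)(H-S_a)K(a,a;s)+B_{\mathrm{ker}}(a;s)\bigr]$.
\item Entropy regularizer: $\beta\psi'(H)\,\mathcal V_\theta(S;s)$.
\item KL term: the inner product with $\nabla_\theta H$ is $-\mathcal V_\theta+\mathcal C_\theta$, so multiplying by $-\gamma$ yields $\gamma\mathcal V_\theta-\gamma\mathcal C_\theta$.
\end{itemize}
Summing the three contributions gives the claimed formula.

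\textbf{Main obstacle.} I expect the only delicate point to be the KL term. The recentring by $\E_{\pi_\theta}[S_{\mathrm{ref}}]$ must be justified through the score identity applied to the $c$-index, not the $b$-index, so that $\mathcal C_\theta$ appears with exactly the stated argument order. The kernel $K$ is symmetric, so this ordering is consistent. I also need to check the sign convention separating the diagonal term from $B_{\mathrm{ker}}$. Once these are checked, setting $K(b,c;s)=\delta_{bc}/\pi_b$ should recover Theorem~\ref{thm:fisher_real_world}, since $B_{\mathrm{ker}}$ then vanishes and the diagonal term becomes $A(S_a-H)$; this serves as a sanity check.
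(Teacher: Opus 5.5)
Your proposal is correct and follows essentially the same route as the paper. Both arguments recentre $\nabla_\theta\Hresp$ to $\sum_b\pi_b(S_b-H)G_b$ via the score identity, expand the objective's gradient in the score vectors, pair the two through $K$, and split off the $b=a$ term to obtain $B_{\mathrm{ker}}$. The only cosmetic difference is that the paper first proves a general identity for an arbitrary regularizer $\mathcal R$ using the centred coefficients $r_c-\bar r$ and then specializes, whereas you treat the entropy and KL terms separately; this yields the same $(\beta\psi'(H)+\gamma)\mathcal V_\theta-\gamma\,\mathcal C_\theta$.
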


\subsection*{Proof of Theorem~\ref{thm:param_reg_entropy_drift}}
\label{appendix:param_reg_entropy}
\begin{proof}
We first prove a general formula for an arbitrary smooth regularizer
\[
\ell_a^{\mathcal R}(\theta)
=
A(a,s)\log\pi_\theta(a\mid s)+\mathcal R(\pi_\theta(\cdot\mid s)).
\]

\textbf{Step 1. Gradient of response-level entropy.}
By definition,
\begin{align}
\nabla_\theta\Hresp(s)
&=
-\sum_{b\in\mathcal A_s}
\nabla_\theta\bigl(\pi_b\log\pi_b\bigr)
\notag\\
&=
-\sum_{b\in\mathcal A_s}
(\log\pi_b+1)\nabla_\theta\pi_b
\notag\\
&=
-\sum_{b\in\mathcal A_s}
\pi_b(\log\pi_b+1)G_b.
\end{align}
Using the zero-score identity
\[
\sum_{b\in\mathcal A_s}\pi_bG_b
=
\sum_{b\in\mathcal A_s}\nabla_\theta\pi_b
=
\nabla_\theta\sum_{b\in\mathcal A_s}\pi_b
=
0,
\]
we have
\begin{align}
\nabla_\theta\Hresp(s)
&=
-\sum_{b\in\mathcal A_s}
\pi_b(\log\pi_b+H)G_b
\notag\\
&=
\sum_{b\in\mathcal A_s}
\pi_b(S_b-H)G_b.
\label{eq:param_grad_H}
\end{align}

\textbf{Step 2. General regularizer.}
Let
\[
r_c:=\partial_{\pi_c}\mathcal R(\pi),
\qquad
\bar r:=\sum_{c\in\mathcal A_s}\pi_c r_c.
\]
By the chain rule,
\begin{align}
\nabla_\theta\mathcal R(\pi_\theta(\cdot\mid s))
&=
\sum_{c\in\mathcal A_s}
\partial_{\pi_c}\mathcal R(\pi)\nabla_\theta\pi_c
\notag\\
&=
\sum_{c\in\mathcal A_s}
\pi_c r_c G_c
\notag\\
&=
\sum_{c\in\mathcal A_s}
\pi_c(r_c-\bar r)G_c,
\label{eq:param_grad_general_R}
\end{align}
Therefore,
\begin{equation}
\nabla_\theta\ell_a^{\mathcal R}
=
A(a,s)G_a
+
\sum_{c\in\mathcal A_s}
\pi_c(r_c-\bar r)G_c.
\label{eq:param_grad_general_objective}
\end{equation}

Taking the inner product between
\eqref{eq:param_grad_H} and \eqref{eq:param_grad_general_objective},
we obtain
\begin{align}
D_{\mathrm{RL}}^{\theta,\mathcal R}(a;s)
&:=
\left\langle
\nabla_\theta\Hresp(s),
\nabla_\theta\ell_a^{\mathcal R}
\right\rangle
\notag\\
&=
A(a,s)
\sum_{b\in\mathcal A_s}
\pi_b(S_b-H)\langle G_b,G_a\rangle
\notag\\
&\quad
+
\sum_{b,c\in\mathcal A_s}
\pi_b\pi_c(S_b-H)(r_c-\bar r)
\langle G_b,G_c\rangle
\notag\\
&=
-A(a,s)
\E_{b\sim\pi_\theta(\cdot\mid s)}
\left[
(H-S_b)K(b,a;s)
\right]
\notag\\
&\quad
+
\E_{b,c\sim\pi_\theta(\cdot\mid s)}
\left[
(S_b-H)(r_c-\bar r)K(b,c;s)
\right].
\label{eq:param_general_R_drift}
\end{align}
This is the general regularized parameter-space entropy-drift identity.

\textbf{Step 3. Apply the general identity to entropy and KL regularization.}
Now take
\[
\mathcal R(\pi)
=
\beta\psi(\Hresp(\pi))
-
\gamma D_{\mathrm{KL}}(\pi\|\pi_{\mathrm{ref}}),
\]
where
\[
D_{\mathrm{KL}}(\pi\|\pi_{\mathrm{ref}})
=
\sum_{c\in\mathcal A_s}
\pi_c\log\frac{\pi_c}{\rho_c},
\qquad
\rho_c:=\pi_{\mathrm{ref}}(c\mid s).
\]
For the entropy term,
\[
\partial_{\pi_c}\Hresp(\pi)
=
-(1+\log\pi_c)
=
S_c-1.
\]
For the KL term,
\[
\partial_{\pi_c}D_{\mathrm{KL}}(\pi\|\pi_{\mathrm{ref}})
=
\log\frac{\pi_c}{\rho_c}+1.
\]
Therefore,
\[
r_c
=\partial_{\pi_c}\mathcal R(\pi) = 
\beta\psi'(H)(S_c-1)
+
\gamma S_c
-
\gamma S_c^{\mathrm{ref}}
-
\gamma.
\]
Let
\[
\bar S_{\mathrm{ref}}
:=
\E_{\pi_\theta}[S_{\mathrm{ref}}]
=
\sum_{c\in\mathcal A_s}\pi_cS_c^{\mathrm{ref}}.
\]
we have
\begin{align}
\bar r
&=
\sum_{c\in\mathcal A_s}\pi_c r_c
=
\beta\psi'(H)(H-1)
+
\gamma H
-
\gamma\bar S_{\mathrm{ref}}
-
\gamma.
\end{align}
Hence
\begin{align}
r_c-\bar r
&=
\beta\psi'(H)(S_c-H)
+
\gamma(S_c-H)
-
\gamma(S_c^{\mathrm{ref}}-\bar S_{\mathrm{ref}})
\notag\\
&=
\bigl(\beta\psi'(H)+\gamma\bigr)(S_c-H)
-
\gamma(S_c^{\mathrm{ref}}-\bar S_{\mathrm{ref}}).
\label{eq:r_centered}
\end{align}

Substituting \eqref{eq:r_centered} into the second term of
\eqref{eq:param_general_R_drift}, we get
\begin{align}
&\E_{b,c\sim\pi_\theta}
\left[
(S_b-H)(r_c-\bar r)K(b,c;s)
\right]
\notag\\
&\quad
=
\bigl(\beta\psi'(H)+\gamma\bigr)
\E_{b,c\sim\pi_\theta}
\left[
(S_b-H)(S_c-H)K(b,c;s)
\right]
\notag\\
&\qquad
-
\gamma
\E_{b,c\sim\pi_\theta}
\left[
(S_b-H)(S_c^{\mathrm{ref}}-\bar S_{\mathrm{ref}})K(b,c;s)
\right].
\end{align}
Define
\begin{align}
\mathcal V_\theta(S;s)
&:=
\E_{b,c\sim\pi_\theta(\cdot\mid s)}
\left[
(S_b-H)(S_c-H)K(b,c;s)
\right],
\\
\mathcal C_\theta(S,S_{\mathrm{ref}};s)
&:=
\E_{b,c\sim\pi_\theta(\cdot\mid s)}
\left[
(S_b-H)(S_c^{\mathrm{ref}}-\bar S_{\mathrm{ref}})K(b,c;s)
\right].
\end{align}
Then
\begin{align}
D_{\mathrm{RL}}^\theta(a;s)
&=
-A(a,s)
\E_{b\sim\pi_\theta(\cdot\mid s)}
\left[
(H-S_b)K(b,a;s)
\right]
\notag\\
&\quad
+
\bigl(\beta\psi'(H)+\gamma\bigr)
\mathcal V_\theta(S;s)
-
\gamma\mathcal C_\theta(S,S_{\mathrm{ref}};s).
\end{align}

It remains to verify
\[
\mathcal V_\theta(S;s)=\|\nabla_\theta\Hresp(s)\|_2^2.
\]
By \eqref{eq:param_grad_H},
\begin{align}
\|\nabla_\theta\Hresp(s)\|_2^2
&=
\left\langle
\sum_{b}\pi_b(S_b-H)G_b,
\sum_{c}\pi_c(S_c-H)G_c
\right\rangle
\notag\\
&=
\sum_{b,c}
\pi_b\pi_c(S_b-H)(S_c-H)
\langle G_b,G_c\rangle
\notag\\
&=
\mathcal V_\theta(S;s).
\end{align}

Finally, separating the \(b=a\) term from the task-driven part,
\begin{align}
&-A(a,s)
\E_{b\sim\pi_\theta(\cdot\mid s)}
\left[
(H-S_b)K(b,a;s)
\right]
\notag\\
&\quad
=
-A(a,s)
\left[
\pi_\theta(a\mid s)(H-S_a)K(a,a;s)
+
\sum_{b\neq a}\pi_\theta(b\mid s)(H-S_b)K(b,a;s)
\right].
\end{align}
With
\[
B_{\mathrm{ker}}(a;s)
:=
\sum_{b\neq a}
\pi_\theta(b\mid s)(H-S_b)K(b,a;s),
\]
we obtain the split form. This completes the proof. 
\end{proof}

\newpage
\section{Experimental Details}
\subsection{Base RL Methods Used in Experiments}\label{appendix:baselines}
\paragraph{PPO.}
Proximal Policy Optimization (PPO)~\cite{schulman2017ppo} is a representative actor-critic algorithm that stabilizes policy learning by constraining the update to remain close to the behavior policy. In LLM post-training, PPO typically treats each token as an action and estimates token-level advantages with a learned value function, usually via generalized advantage estimation (GAE). Its clipped surrogate objective is
\begin{equation}
    J_{\mathrm{PPO}}(\theta)
=
\mathbb{E}_t\!\left[
\min\!\left(
\rho_t(\theta)\hat A_t,\,
\operatorname{clip}\!\big(\rho_t(\theta),1-\epsilon,1+\epsilon\big)\hat A_t
\right)
\right],
\qquad
\rho_t(\theta)
=
\frac{\pi_\theta(a_t\mid s_t)}{\pi_{\theta_{\mathrm{old}}}(a_t\mid s_t)}.
\end{equation}

PPO is stable and widely adopted, but it is relatively expensive for large language models because it requires an additional critic/value model to estimate $\hat A_t$.

\paragraph{GRPO.}
Group Relative Policy Optimization (GRPO)~\cite{shao2024deepseekmath} extends the group-based idea by replacing critic-based advantages with within-group relative rewards. Given a query $q$, GRPO samples a group of outputs $\{o_i\}_{i=1}^{G}$ and computes a normalized group-based advantage
\begin{equation}
    \hat A_i
=
\frac{R_i-\operatorname{mean}(\{R_j\}_{j=1}^{G})}
{\operatorname{std}(\{R_j\}_{j=1}^{G}) + \epsilon},
\end{equation}
which is shared across all tokens in output $o_i$ under the standard outcome-level setting. The policy is then updated by maximizing the clipped objective
\begin{align}
&J_{\mathrm{GRPO}}(\theta)
=\notag\\
&\mathbb{E}\!\left[
\frac{1}{G}\sum_{i=1}^{G}\frac{1}{|o_i|}\sum_{t=1}^{|o_i|}
\min\!\Big(
\rho_{i,t}(\theta)\hat A_i,\,
\operatorname{clip}\!\big(\rho_{i,t}(\theta),1-\epsilon,1+\epsilon\big)\hat A_i
\Big)
-
\gamma D_{\mathrm{KL}}\!\big(\pi_\theta \,\|\, \pi_{\mathrm{ref}}\big)
\right],
\end{align}
where
\begin{equation}
\rho_{i,t}(\theta)
=
\frac{\pi_\theta(o_{i,t}\mid q,o_{i,<t})}
{\pi_{\theta_{\mathrm{old}}}(o_{i,t}\mid q,o_{i,<t})}.
\end{equation}

GRPO preserves the stable clipped update of PPO while eliminating the critic, which makes it especially attractive for large-scale LLM reinforcement learning.

\paragraph{DAPO.}
Decoupled Clip and Dynamic sAmpling Policy Optimization (DAPO)~\cite{yu2025dapo} is a group-based policy optimization to improve training in long-form reasoning settings, especially long chain-of-thought trajectories. DAPO keeps the group-based advantage formulation
\begin{equation}
    \hat A_i
=
\frac{R_i-\operatorname{mean}(\{R_j\}_{j=1}^{G})}
{\operatorname{std}(\{R_j\}_{j=1}^{G}) + \epsilon},
\end{equation}

but replaces the standard response-level averaging used in GRPO with a token-level aggregation over all tokens in the sampled group, which better balances updates across responses of different lengths:
\begin{align}
    &J_{\mathrm{DAPO}}(\theta)
=\notag\\
&\mathbb{E}\!\left[
\frac{1}{\sum_{i=1}^{G}|o_i|}
\sum_{i=1}^{G}\sum_{t=1}^{|o_i|}
\min\!\Big(
\rho_{i,t}(\theta)\hat A_i,\,
\operatorname{clip}\!\big(\rho_{i,t}(\theta),1-\epsilon_{\mathrm{low}},1+\epsilon_{\mathrm{high}}\big)\hat A_i
\Big)
\right].
\end{align}

In the original formulation, DAPO further removes the explicit KL term and introduces four practical techniques: decoupled asymmetric clipping, dynamic sampling of informative groups, token-level policy-gradient loss, and overlong reward shaping. 

\paragraph{GSPO.}
Group Sequence Policy Optimization (GSPO)~\citep{zheng2025group} is a group-based RL method that moves importance weighting and clipping from token level to sequence level. For a given query \(q\), GSPO samples a group of outputs \(\{o_i\}_{i=1}^{G}\) and uses the same normalized group-based advantage as GRPO:
\begin{equation}
    \hat A_i
=
\frac{R_i-\operatorname{mean}(\{R_j\}_{j=1}^{G})}
{\operatorname{std}(\{R_j\}_{j=1}^{G})}.
\end{equation}
It then defines a length-normalized sequence-level importance ratio
\begin{equation}
s_i(\theta)
=
\left( \frac{\pi_\theta(o_i\mid q)}{\pi_{\theta_{\mathrm{old}}}(o_i\mid q)} \right)^{\frac{1}{|o_i|}}
=
\exp\!\left(
\frac{1}{|o_i|}\sum_{t=1}^{|o_i|}
\log\frac{\pi_\theta(o_{i,t}\mid q,o_{i,<t})}{\pi_{\theta_{\mathrm{old}}}(o_{i,t}\mid q,o_{i,<t})}
\right),
\end{equation}
where the length normalization keeps the ratio scale comparable across responses of different lengths. The policy is optimized with the clipped sequence-level objective
\begin{equation}
J_{\mathrm{GSPO}}(\theta)
=
\mathbb{E}\!\left[
\frac{1}{G}\sum_{i=1}^{G}
\min\!\Big(
s_i(\theta)\hat A_i,\,
\operatorname{clip}\!\big(s_i(\theta),1-\epsilon,1+\epsilon\big)\hat A_i\Big)
\right].
\end{equation}
Compared with token-level clipping, GSPO aligns the optimization granularity with sequence-level rewards and improves training stability.

\subsection{Implementation Details}
\label{appendix:implementation}
We use rule-based outcome rewards across all benchmarks. For ALFWorld and WebShop, successful trajectories receive a reward of $10$, failed trajectories receive $0$, and invalid actions incur an additional penalty of $-0.1$. For the SWE task, we use binary outcome rewards, assigning $1$ to successful trajectories and $0$ otherwise. Across all group-based RL methods, the rollout group size is fixed to $N=8$.

For ALFWorld and WebShop, we use the verl-agent~\citep{fenggroup} training framework. The actor learning rate is set to $1\times 10^{-6}$, the rollout temperature is $1.0$, the validation temperature is $0.4$, and the KL loss coefficient is fixed to $0.01$. We sample $16$ groups per rollout, yielding $128$ environments in total. ALFWorld uses a maximum prompt length of $2048$ tokens and a maximum response length of $512$ tokens, with each episode capped at $50$ environment steps. WebShop uses a maximum prompt length of $4096$ tokens and a maximum response length of $512$ tokens, with each episode capped at $15$ environment steps. We train Qwen2.5-1.5B on $4\times$A800 GPUs and Qwen2.5-7B on $8\times$A800 GPUs for $150$ training steps.

For the SWE task, we use rLLM~\citep{rllm2025} to train Qwen3-32B with a learning rate of $1\times 10^{-6}$. The maximum prompt and response lengths are set to $4096$ and $65536$ tokens, respectively. The training batch size is set to $64$, and we apply rejection sampling with a $2\times$ oversampling ratio: we sample up to twice the target batch size and reject rollout groups whose rewards are all $0$ or all $1$. This increases the proportion of informative samples in each batch while reducing unnecessary forward computation compared with directly training with a batch size of $128$. We train the model on $64\times$H200 GPUs for $250$ steps, with a sampling temperature of $1.0$ during training and $0.6$ during evaluation.

All reported results are averaged over $3$ random seeds. For all AEM experiments, we set $\lambda=1$ and $\epsilon = 10^{-8}$. The temperature $\lambda$ controls the range of the modulation coefficient $\alpha$, while $\epsilon$ is a small stability constant used to prevent numerical instability in min-max normalization and self-calibrated coefficient normalization.
\subsection{Prompts}\label{appendix:prompt}
\begin{promptbox}[title=Prompt Template for ALFWorld]
\begin{lstlisting}[style=mytt]
You are an expert agent operating in the ALFRED embodied Environment. Your task is to: {task_description}. Prior to this step, you have already taken {step_count} step(s). Below are the most recent {history_length} observations and the corresponding actions you took: {action_history}. You are now at step {current_step} and your current observation is: {current_observation}. Your admissible actions of the current situation are: [{admissible_actions}].
Now it's your turn to take an action. You should first reason step-by-step about the current situation. This reasoning process MUST be enclosed within <think> </think> tags. Once you've finished your reasoning, you should choose an admissible action for current step and present it within <action> </action> tags.
\end{lstlisting}
\end{promptbox}

\begin{promptbox}[title=Prompt Template for WebShop]
\begin{lstlisting}[style=mytt]
You are an expert autonomous agent operating in the WebShop e-commerce environment.
Your task is to: {task_description}. Prior to this step, you have already taken {step_count} step(s). Below are the most recent {history_length} observations and the corresponding actions you took: {action_history}. You are now at step {current_step} and your current observation is: {current_observation}. Your admissible actions for the current situation are: [{available_actions}].
Now it's your turn to take one action for the current step. You should first reason step-by-step about the current situation, then think carefully which admissible action best advances the shopping goal. This reasoning process MUST be enclosed within <think> </think> tags.
Once you've finished your reasoning, you should choose an admissible action for current step and present it within <action> </action> tags.
\end{lstlisting}
\end{promptbox}

\begin{promptbox}[title=Prompt Template for R2E Training]
\begin{lstlisting}[style=mytt]
You are a programming agent who is provided a github issue and repository bash environment and is tasked to solve certain tasks (e.g., {task_types}) to resolve the issue.

We have access to the following functions:

- BEGIN FUNCTION #1: file_editor -
Description: {file_editor_description}
Parameters:
{file_editor_parameters}
- END FUNCTION #1 -

- BEGIN FUNCTION #2: execute_bash -
Description: {execute_bash_description}
Parameters:
{execute_bash_parameters}
- END FUNCTION #2 -

- BEGIN FUNCTION #3: search -
Description: {search_description}
Parameters:
{search_parameters}
- END FUNCTION #3 -

- BEGIN FUNCTION #4: finish -
Description: {finish_description}
Parameters:
{finish_parameters}
- END FUNCTION #4 -

If you choose to call a function ONLY reply in the following format with NO suffix:
{function_call_format}

<IMPORTANT>
{important_rules}
</IMPORTANT>
\end{lstlisting}
\end{promptbox}

\begin{promptbox}[title=Prompt Template for SWE-bench-Verified Eval]
\begin{lstlisting}[style=mytt]
You are a programming agent who is provided a github issue and repository bash environment and is tasked to solve certain tasks (e.g., {task_types}) to resolve the issue.

We have access to the following functions:

- BEGIN FUNCTION #1: file_editor -
{file_editor_block}
- END FUNCTION #1 -

- BEGIN FUNCTION #2: execute_bash -
{execute_bash_block}
- END FUNCTION #2 -

- BEGIN FUNCTION #3: search -
{search_block}
- END FUNCTION #3 -

- BEGIN FUNCTION #4: finish -
{finish_block}
- END FUNCTION #4 -

If you choose to call a function ONLY reply in the following format with NO suffix:

{function_call_format}

<IMPORTANT>
{important_rules}
</IMPORTANT>
\end{lstlisting}
\end{promptbox}



\newpage

\end{document}